\title{Action-dependent Control Variates for  Policy Optimization via Stein's Identity}
\newcommand*\samethanks[1][\value{footnote}]{\footnotemark[#1]}
\author{
Hao Liu\thanks{Both authors contributed equally. Author ordering determined by coin flip over a Google Hangout.} \\ 
Computer Science\\
UESTC\\ 
Chengdu, China \\
\texttt{\small uestcliuhao@gmail.com}
\And
Yihao Feng\samethanks \\
Computer science \\
University of Texas at Austin\\  
Austin, TX, 78712 \\
\texttt{\small yihao@cs.utexas.edu} 
\And 
Yi Mao \\
Microsoft \\
Redmond, WA, 98052 \\
\texttt{\small maoyi@microsoft.com}
\And
Dengyong Zhou \\
Google \\
Kirkland, WA, 98033\\
\texttt{\small dennyzhou@google.com}
\And
Jian Peng \\
Computer Science \\
UIUC \\ 
Urbana, IL 61801 \\
\texttt{\small jianpeng@illinois.edu}
\And
Qiang Liu \\
Computer Science \\
University of Texas at Austin \\
Austin, TX, 78712 \\
\texttt{\small lqiang@cs.utexas.edu}
}
\newcommand{\aw}{\psi_w}
\begin{document}

\maketitle

\begin{abstract}
Policy gradient methods have achieved remarkable successes in solving challenging reinforcement learning problems. However, it still often suffers from the large variance issue on policy gradient estimation, which leads to poor sample efficiency during training. 
In this work, 
we propose a control variate method to effectively reduce variance for policy gradient methods. Motivated by the Stein's identity, 
our method extends the previous control variate methods used in REINFORCE and advantage actor-critic by introducing more general action-dependent baseline functions. 
Empirical studies show that our method significantly improves the sample efficiency of the state-of-the-art policy gradient approaches. 
\end{abstract}

\section{Introduction}

\label{RL is cool}
Deep reinforcement learning (RL) provides a general framework for solving challenging goal-oriented sequential decision-making problems, It has recently achieved remarkable successes in advancing the frontier of AI technologies \citep[][]{Silver2017, mnih2013playing, silver2016mastering, Schulman2017ProximalPO}.
Policy gradient (PG) is one of the most successful model-free RL approaches 
that has been widely applied to high dimensional continuous control, vision-based navigation and 
video games 
\citep[][]{schulman2015high, kakade2002natural, Schulman2015TrustRP,mnih2016asynchronous}. 

\label{But variance is large.}
Despite these successes, a key problem of policy gradient methods is that the gradient estimates often have high variance. 
A naive solution to fix this issue would be generating a large amount of rollout samples to obtain a reliable gradient estimation in each step.  
Regardless of the cost of generating large samples, in many practical applications like developing driverless cars,  it may not even be possible to generate as many samples as we want.  
A variety of variance reduction techniques have been proposed for policy gradient methods 
(See e.g. \citealt{weaver2001optimal}, \citealt{greensmith2004variance}, \citealt{schulman2015high} and \citealt{asadi2017mean}). 
%
%

In this work, we focus on the control variate method, one of the most widely used variance reduction techniques in policy gradient and variational inference. 
The idea of the control variate method is to subtract a Monte Carlo gradient estimator by a baseline function that analytically has zero expectation. The resulted estimator does not introduction biases theoretically, but may achieve much lower variance if the baseline function is properly chosen such that it cancels out the variance of the original gradient estimator.  
Different control variates yield different variance reduction methods. For example, in REINFORCE \citep{williams1992simple},  a constant baseline function is chosen as a control variate; advantage actor-critic (A2C) \citep{Sutton1998ReinforcementL, mnih2016asynchronous} considers a state-dependent baseline function as the control variate, which is often set to be an estimated value function $V(s)$. 
More recently, in Q-prop \citep{Gu2016QPropSP}, a more general baseline function that linearly depends on the actions is proposed and shows promising results on several challenging tasks. 
%
It is natural to expect even more flexible baseline functions which depend on both states and actions to yield powerful variance reduction. 
However, constructing such baseline functions turns out to be fairly challenging, because it requires new and more flexible mathematical identities that can yield a larger class of baseline functions with 
zero analytic expectation under the policy distribution of interest.

To tackle this problem,  we sort to the so-called 
 \emph{Stein's identity} \citep{stein1986approximate} which defines a broad class of identities that are sufficient to 
 fully characterize the distribution under consideration \citep[see e.g.,][]{liu2016kernelized,chwialkowski2016kernel}.   
  By applying the Stein's identity and also drawing connection with the reparameterization trick \citep[][]{kingma2013auto, rezende2014stochastic}, 
we construct a class of  \emph{Stein control variate} that allows us to use arbitrary baseline functions that depend on both actions and states. Our approach tremendously extends the existing control variates used in REINFORCE, A2C and Q-prop. 

We evaluate our method on a variety of reinforcement learning tasks. 
Our experiments show that our Stein control variate can significantly  reduce the variance of gradient estimation with more flexible and nonlinear baseline functions. When combined with different policy optimization methods, including both proximal policy optimization (PPO) \citep{Schulman2017ProximalPO, heess2017emergence} and  trust region policy optimization (TRPO)  \citep{Schulman2015TrustRP, schulman2015high}, it greatly improves the sample efficiency of the entire policy optimization.  

\section{Background}

We first introduce basic backgrounds of reinforcement learning and policy gradient and set up the notation that we use in the rest of the paper in Section~\ref{sec:rlbackground}, and then discuss the control variate method as well as its application in policy gradient in Section~\ref{sec:control}.

\subsection{Reinforcement Learning and Policy Gradient}

\label{sec:rlbackground}
Reinforcement learning considers the problem of finding an optimal policy for an agent which interacts with an uncertain environment and collects reward per action. The goal of the agent is to maximize the long-term cumulative reward. 
Formally, this problem can be formulated as a Markov decision process
over the environment states $s \in {\it S}$ and agent actions $a \in {\it A}$,
under an unknown environmental dynamic defined by a transition probability $T(s'|s,a)$ 
and a reward signal $r(s,a)$ immediately following the action $a$ performed at state $s$. 
The agent's action $a$ is selected by a conditional probability distribution $\pi(a|s)$ called policy. 
In policy gradient methods, 
we consider a set of candidate policies $\pi_\theta(a|s)$ parameterized by $\theta$ and obtain the optimal policy by 
 maximizing the expected cumulative reward or return
$$
J(\theta) 
= \E_{s\sim \rho_\pi, a \sim \pi(a|s)} \left[ r(s,a) \right ], 
$$
where 
$\rho_{\pi}(s) = \sum_{t=1}^\infty \gamma^{t-1} \prob(s_t = s)$ is the normalized  discounted state visitation distribution 
with discount factor $\gamma\in [0,1)$.
To simplify the notation, we denote $\E_{s\sim \rho_\pi, a \sim \pi(a|s)}[\cdot]$ by simply $\E_{\pi}[\cdot]$ in the rest of paper. 
%
\newcommand{\pit}{\pi}
According to the policy gradient theorem \citep{Sutton1998ReinforcementL}, the gradient of $J(\theta)$ can be written as 
\begin{align}\label{equ:pg}
\nabla_\theta J(\theta) = 
\E_{\pi}\left[\nabla_\theta \log \pi(a | s) Q^{\pit}(s,a)\right], 
\end{align}
where $Q^{\pit} (s,a) =\E_{\pi}\left[\sum_{t=1}^\infty \gamma^{t-1} r(s_t, a_t)|s_1=s, a_1=a\right]$ denotes the expected return under policy $\pit$ starting from state $s$ and action $a$.
Different policy gradient methods are based on 
different stochastic estimation of the expected gradient in Eq \eqref{equ:pg}. 
Perhaps the most straightforward way is to simulate the environment with the current policy $\pit$ to obtain a trajectory $\{(s_t, a_t, r_t)\}_{t=1}^n$ and estimate $\nabla_\theta J(\theta)$ using the Monte Carlo estimation: 
\begin{align}\label{equ:hatjj}
\hat \nabla_\theta J(\theta) = 
\frac{1}{n}\sum_{t=1}^n \gamma^{t-1} \nabla_\theta \log \pi(a_t | s_t) \hat Q^{\pit}(s_t, a_t),  
\end{align}
where $\hat Q^\pi(s_t,a_t)$ is an empirical estimate of $Q^{\pit}(s_t, a_t)$, e.g., $\hat{Q}^{\pit}(s_t, a_t) = \sum_{j \geq t} \gamma^{j - t} r_j$.
Unfortunately, this naive method often introduces large variance in gradient estimation.
It is almost always the case that we need to use control variates method for variance reduction, which we will introduce in the following. 
It has been found that biased estimators help improve the performance, 
e.g., by using biased estimators of $Q^\pi$ or dropping the $\gamma^{t-1}$ term in Eq \eqref{equ:hatjj}. 
In this work, we are interested in improving the performance without introducing additional biases, at least theoretically. 

\subsection{Control Variate} 
\label{sec:control}
The control variates method is one of the most widely used variance reduction techniques in policy gradient. 
Suppose that we want to estimate the expectation $\mu = \E_{\tau}[g(s,a)]$
 with Monte Carlo samples $(s_t,a_t)_{t=1}^n$ drawn from some distribution $\tau,$ which is assumed to have a large variance $\var_\tau(g)$. 
 The \emph{control variate} is a function $f(s,a)$ with known analytic expectation under $\tau$, 
 which, without losing of generality, can be assumed to be zero: 
$
\E_{\tau}[f(s,a)] = 0. 
$
With $f$, we can have an alternative unbiased estimator 
$$
\hat \mu =\frac{1}{n} \sum_{t=1}^n  \left(g(s_t,a_t) - f(s_t,a_t)\right), 
$$
where the variance of this estimator is $\var_\tau(g-f)/n$, instead of $\var_\tau(g)/n$ for the Monte Carlo estimator. 
By taking $f$ to be similar to $g$, e.g. $f = g - \mu$ in the ideal case, 
the variance of $g-f$ can be significantly reduced, thus resulting in a more reliable estimator. 

The key step here is to find an identity that yields a large class of functional $f$ with zero expectation. 
In most existing policy gradient methods, the following identity is used 
\begin{align}\label{equ:bs}
\E_{\pi(a|s)}\left[\nabla_\theta \log \pi(a|s)\phi(s) \right] = 0, ~~ \text{~~~for any function $\phi$}.
\end{align}
Combining it with the policy gradient theorem, we obtain 
\begin{align}\label{equ:bs}
\hat \nabla_\theta J(\theta) = 
\frac{1}{n}\sum_{t=1}^n \nabla_\theta \log \pi(a_t|s_t) \left(\hat Q^\pi(s_t, a_t)  - \phi(s_t)\right), 
\end{align}
Note that we drop the $\gamma^{t-1}$ term in Eq \eqref{equ:hatjj} as we do in practice. 
The introduction of the function $\phi$ does not change the expectation but can decrease the variance significantly when it is chosen properly to cancel out the variance of $Q^\pi(s, a)$.   
In REINFORCE, $\phi$ is set to be a constant $\phi(s) = b$ baseline, and $b$ is usually set to approximate the average reward, or determined by minimizing $\var(\hat \nabla_\theta J(\theta))$ empirically. 
In advantage actor-critic (A2C), 
$\phi(s)$ is set to be an estimator of the value function $V^\pi(s) = \E_{\pi(a|s)}[Q^\pi(s, a)]$, so that $\hat Q^\pi(s,a) - \phi(s)$ is an estimator of the advantage function. 
For notational consistency, we call $\phi$ the baseline function and $f(s,a)=\nabla_\theta \log \pi(a|s) \phi(s)$ the corresponding control variate. 

Although REINFORCE and A2C have been widely used, their applicability is limited by the possible choice of $\phi$. 
Ideally, we want to set $\phi$ to equal $Q^\pi(s, a)$ up to a constant to reduce the variance of $\hat \nabla_\theta J(\theta)$ 
to close to zero. However, this is impossible for REINFORCE or A2C because $\phi(s)$ only depends on state $s$ but not action $a$ by its construction. Our goal is to develop a more general control variate that yields much smaller variance of gradient estimation than the one in Eq \eqref{equ:bs}. 

\section{Policy Gradient with Stein control variate}
\label{sec:policy}
In this section, we present our Stein control variate for policy gradient methods. 
 We start by introducing  Stein's identity in Section~\ref{sec:stein}, 
 then develop in Section~\ref{sec:policystein} a variant that yields a new control variate for policy gradient and discuss its connection to the reparameterization trick and the Q-prop method. 
We provide approaches to estimate the optimal baseline functions in Section~\ref{sec:constructing}, 
and discuss the special case of the Stein control variate for Gaussian policies in Section~\ref{sec:gaussian}. 
We apply our control variate to proximal policy optimization (PPO) in Section~\ref{sec:ppo}. 

\subsection{Stein's Identity} 
\label{sec:stein}
Given a policy $\pi(a|s)$, 
Stein's identity w.r.t $\pi$ is 
\begin{align}\label{stein}
\E_{\pi(a|s)}\left [\nabla_a \log \pi(a|s) \phi(s, a) + \nabla_a \phi(s,a)~ \right] = 0, ~~~~~ \text{$\forall s$}, 
\end{align}
which holds for any real-valued function $\phi(s,a)$ with some proper conditions. To see this, note
the left hand side of Eq \eqref{stein} is equivalent to 
$\int \nabla_a \left(\pi(a|s) \phi(s, a)\right) da$, 
which equals zero if $\pi(a|s)\phi(s, a)$ equals zero on the boundary of the integral domain, 
or decay sufficiently fast (e.g., exponentially) when the integral domain is unbounded. 

The power of Stein's identity lies in the fact that it defines an infinite set of identities, indexed by arbitrary function $\phi(s,a)$, which is sufficient to uniquely identify a distribution as shown  in the work of Stein's method for proving central limit theorems \citep{stein1986approximate, barbour2005introduction}, goodness-of-fit test \citep{chwialkowski2016kernel,liu2016kernelized}, 
and approximate inference \citep{Liu2016SteinVG}.  
\citet{oates2017control} has applied Stein's identity as a control variate 
for general Monte Carlo estimation, which is shown to yield a \emph{zero-variance} estimator
because the control variate is flexible enough to  approximate the function of interest arbitrarily well.


\subsection{Stein Control Variate for Policy Gradient} 
\label{sec:policystein}
Unfortunately, 
for the particular case of policy gradient, 
it is not straightforward to directly apply Stein's identity \eqref{stein} as a control variate, since the dimension of the left-hand side of \eqref{stein} does not match the dimension of a policy gradient:
the gradient in \eqref{stein} is taken w.r.t. the action $a$, 
while the policy gradient in \eqref{equ:pg} is taken w.r.t. the parameter $\theta$. 
Therefore, we need a general approach to connect $\nabla_a \log \pi(a|s)$ to $\nabla_\theta \log \pi(a|s)$ in order to apply Stein's identity as a control variate for policy gradient. We show in the following theorem that this is possible when the policy is \emph{reparameterizable} in that $a\sim \pi_\theta(a|s)$ can be viewed as 
generated by $a = f_\theta(s,\xi)$ where $\xi$ is a random noise drawn from some distribution independently of $\theta$. 
With an abuse of notation, we denote by $\pi(a,\xi|s)$ the joint distribution of $(a,\xi)$ conditioned on $s$, 
so that $\pi(a|s) = \int \pi(a|s,\xi) \pi(\xi)d\xi$, 
where $\pi(\xi)$ denotes the distribution generating $\xi$ and $\pi(a|s,\xi) =\delta (a - f(s, \xi))$ where $\delta$ is the Delta function. 



\begin{thm}
\label{theorem:stein}
With the reparameterizable policy defined above, using Stein's identity, we can derive 
\begin{align}\label{parastein}
\E_{\pi(a|s)}\left [\nabla_\theta \log \pi(a | s) \phi(s, a)  \right] = 
\E_{\pi(a,\xi|s)}\left [\nabla_\theta f_\theta(s, \xi) \nabla_a \phi(s, a)   \right]. 
\end{align}
\end{thm}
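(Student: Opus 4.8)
The plan is to evaluate the single quantity $\nabla_\theta \E_{\pi(a|s)}[\phi(s,a)]$ in two different ways, for a fixed state $s$: one route reproduces the left-hand side of \eqref{parastein} and the other reproduces the right-hand side, so the two must coincide. First I would differentiate through the score function. Writing the expectation as $\int \pi(a|s)\phi(s,a)\,da$, pushing $\nabla_\theta$ inside the integral, and using $\nabla_\theta \pi(a|s) = \pi(a|s)\,\nabla_\theta\log\pi(a|s)$ gives
\begin{align*}
\nabla_\theta\,\E_{\pi(a|s)}[\phi(s,a)] = \E_{\pi(a|s)}\big[\nabla_\theta\log\pi(a|s)\,\phi(s,a)\big],
\end{align*}
which is exactly the left-hand side of \eqref{parastein}; note that $\phi(s,a)$ carries no explicit $\theta$-dependence, so only the measure is differentiated.

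Second, I would exploit the reparameterization $a=f_\theta(s,\xi)$ with $\xi\sim\pi(\xi)$ drawn independently of $\theta$, which lets me rewrite the same expectation as $\E_{\pi(\xi)}[\phi(s,f_\theta(s,\xi))]$. Now the measure $\pi(\xi)$ is free of $\theta$ and all the $\theta$-dependence has been moved into the integrand, so differentiating under the integral and applying the chain rule --- with $\nabla_\theta f_\theta(s,\xi)$ the Jacobian of the vector-valued map $f_\theta$ and $\nabla_a\phi$ the action-gradient of the scalar $\phi$ --- yields
\begin{align*}
\nabla_\theta\,\E_{\pi(\xi)}[\phi(s,f_\theta(s,\xi))] = \E_{\pi(\xi)}\big[\nabla_\theta f_\theta(s,\xi)\,\nabla_a\phi(s,a)\big|_{a=f_\theta(s,\xi)}\big].
\end{align*}
Since $\pi(a|s,\xi)=\delta(a-f_\theta(s,\xi))$ collapses the $a$-integral onto $a=f_\theta(s,\xi)$, the right-hand side is precisely $\E_{\pi(a,\xi|s)}[\nabla_\theta f_\theta(s,\xi)\,\nabla_a\phi(s,a)]$, i.e. the right-hand side of \eqref{parastein}. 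Equating the two evaluations of $\nabla_\theta\E[\phi]$ closes the proof.

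The main obstacle is analytic rather than algebraic: both steps interchange $\nabla_\theta$ with an integral (a Leibniz / dominated-convergence argument), and this is exactly where the regularity demanded by Stein's identity \eqref{stein} enters, since one needs $\pi(a|s)\phi(s,a)$ to vanish on the boundary or decay fast enough for the boundary terms to drop and the pathwise derivative to be legitimate. A quick consistency check on dimensions is also worthwhile: $\phi$ is scalar, $\nabla_\theta\log\pi$ is a parameter-dimensional vector, and on the right the Jacobian $\nabla_\theta f_\theta$ contracts against the action-space gradient $\nabla_a\phi$ to land back in the same parameter-dimensional space. Finally, I would note that \eqref{parastein} really is a \emph{reparameterized} Stein's identity: specializing to a concrete map, for instance the Gaussian case $a=\mu_\theta(s)+\sigma_\theta(s)\,\xi$, and integrating by parts against $\pi$ recovers \eqref{parastein} directly from \eqref{stein}, which is the sense in which it is ``derived using Stein's identity.''
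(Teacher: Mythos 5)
Your proof is correct, but it follows a genuinely different route from the paper's own proof in the appendix. The paper proves the theorem by smoothing the delta function: it sets $a = f_\theta(s,\xi) + \xi_0$ with $\xi_0 \sim \normal(0, h^2)$, derives from the explicit Gaussian form the pointwise relation $\nabla_\theta \log \pi(a,\xi|s) = -\nabla_\theta f_\theta(s,\xi)\, \nabla_a \log \pi(a,\xi|s)$, multiplies by $\phi(s,a)$ and takes expectations, applies Stein's identity \eqref{stein} to the smoothed conditional $\pi(a|s,\xi)$ to convert $-\E\left[\nabla_a\log\pi(a,\xi|s)\,\phi(s,a)\right]$ into $\E\left[\nabla_a\phi(s,a)\right]$, checks that the extra score term $\nabla_\theta \log\pi(\xi|s,a)$ has zero expectation so that the joint score can be replaced by the marginal score, and finally lets $h \to 0^+$. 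You instead compute the single quantity $\nabla_\theta \E_{\pi(a|s)}[\phi(s,a)]$ in two ways --- by the log-derivative trick (giving the left-hand side) and by the pathwise/reparameterization derivative (giving the right-hand side) --- and equate them. This is precisely the alternative derivation that the paper itself sketches in its ``Relation to the reparameterization trick'' paragraph, where it observes that \eqref{logg} and \eqref{repara} are both equal to $\nabla_\theta L_s(\theta)$ and remarks that this ``provides another way to prove the identity''; the authors, however, relegate that observation to a remark and keep the smoothing argument as the official proof. Your route is shorter and more elementary (two differentiations under the integral sign, no delta-function machinery), and it makes the reason the identity holds transparent. The paper's route is heavier but literally invokes Stein's identity --- matching the theorem's phrasing ``using Stein's identity, we can derive'' --- and, by working with the smoothed joint density at each $h>0$, it avoids assuming outright that the pushforward density $\pi(a|s)$ exists and is $\theta$-differentiable, which your score-function step implicitly requires. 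Both arguments need comparable regularity (interchange of $\nabla_\theta$ with integration, sufficient decay at the boundary), which you correctly flag.
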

%
\begin{proof}
See Appendix for the detail proof. To help understand the intuition, 
we can consider the Delta function as a Gaussian with a small variance $h^2$, i.e. $\pi(a|s,\xi)\propto \exp(-\|a - f(s, \xi)\|^2_2/2h^2)$, for which it is easy to show that 
\begin{align}
\label{dlogpi}
\nabla_{\theta} \log \pi(a, \xi ~|~ s) = -
\nabla_\theta f_\theta(s, \xi)~ \nabla_{a} \log \pi(a, \xi ~|~ s).
\end{align}
This allows us to convert between the derivative w.r.t. $a$ and w.r.t. $\theta$, and apply Stein's identity.
\end{proof}

\paragraph{Stein Control Variate} Using Eq \eqref{parastein} 
as a control variate, we obtain the following general formula of policy gradient:
\begin{align}\label{equ:mainn}
\nabla_\theta J(\theta) 
= \E_{\pi} \left [\nabla_\theta \log \pi(a|s) (Q^\pi(s, a)- \phi(s, a)) ~+ ~\nabla_\theta f_\theta(s, \xi) \nabla_a\phi(s, a)\right ], 
\end{align}
where any fixed choice of $\phi$ does not introduce bias to the expectation. Given a sample set $(s_t, a_t, \xi_t)_{t=1}^n$ where $a_t = f_\theta(s_t, \xi_t)$, an estimator of the gradient is 
\begin{align}\label{equ:hatm}
\hat \nabla_\theta J(\theta) 
= \frac{1}{n}
\sum_{t=1}^n 
\left[ \nabla_\theta \log \pi(a_t~|~s_t) (\hat Q^\pi(s_t, a_t)- \phi(s_t, a_t)) + \nabla_\theta f_\theta(s_t, \xi_t) \nabla_a\phi(s_t, a_t)\right ]. 
\end{align}
This estimator clearly generalizes the control variates used in A2C and REINFORCE. To see this,  let  $\phi$ be action independent, i.e. $\phi(s,a) = \phi(s)$ or even $\phi(s, a) = b$, in both cases  
the last term in \eqref{equ:hatm} equals to zero because $\nabla_a\phi = 0$. 
When $\phi$ is action-dependent, the last term \eqref{equ:hatm} does not vanish in general, and in fact will play an important role for variance reduction as we will illustrate later. 


\paragraph{Relation to Q-prop} 
Q-prop is a recently introduced sample-efficient policy gradient method 
that constructs a general control variate using Taylor expansion. 
Here we show that Q-prop can be derived from \eqref{equ:mainn} with a special $\phi$ that depends on the action linearly, so that its gradient w.r.t. $a$ is action-independent, i.e. $\nabla_a \phi(a,s) = \varphi(s)$.  
In this case, Eq \eqref{equ:mainn} becomes
$$
\nabla_\theta J(\theta) 
= \E_{\pi} \left [\nabla_\theta \log \pi(a|s) (Q^\pi(s, a)- \phi(s, a)) ~+~ \nabla_\theta f_\theta(s, \xi) \varphi(s)\right ].
$$
Furthermore, note that $\E_{\pi(\xi)}[ \nabla_\theta f(s, \xi)] = \nabla_\theta \E_{\pi(\xi)}[f(s, \xi)] \defeq \nabla_\theta \mu_\pi(s)$, 
where $\mu_\pi(s)$ is the expectation of the action conditioned on $s$. Therefore, 
$$
\nabla_\theta J(\theta) = \E_{\pi}\left [\nabla_\theta \log \pi(a|s) \left(Q^\pi(s, a)- \phi(s, a)\right) ~+~\nabla_\theta \mu_\pi(s)  \varphi(s )\right ], 
$$
which is the identity used in Q-prop to construct their control variate (see Eq 6 in \citealt{Gu2016QPropSP}).  
In Q-prop, the baseline function is constructed empirically by the first-order Taylor expansion as 
\begin{align}\label{equ:qpropphi}
\phi(s,a) = \hat V^\pi(s) + \langle\nabla_a \hat Q^\pi(s,\mu_\pi(s)), ~ a - \mu_\pi(s)\rangle,
\end{align}
where $\hat V^\pi(s)$ and $\hat Q^\pi(s, a)$ are parametric functions that approximate the value function and Q function under policy $\pi$, respectively. 
%
In contrast, our method allows us to use more general and flexible, nonlinear baseline functions $\phi$ to construct the Stein control variate which is able to decrease the variance more significantly. 

\paragraph{Relation to the reparameterization trick}
The identity \eqref{parastein} is closely connected to the reparameterization trick for gradient estimation which has been widely used in variational inference recently \citep{kingma2013auto, rezende2014stochastic}. 
Specifically, let us consider an auxiliary objective function based on function $\phi$:
$$
L_s(\theta) \defeq \E_{\pi(a|s)}[\phi(s,a) ] = \int  \pi(a|s)\phi(s,a) da. 
$$
Then by the log-derivative trick, we can obtain the gradient of this objective function as
\begin{align}\label{logg}
\nabla_\theta L_s(\theta) = 
\int \nabla_\theta\pi(a|s)\phi(s,a) da =
\E_{\pi(a|s)}\left [\nabla_\theta \log\pi(a|s)\phi(s,a)\right],
\end{align}
which is the left-hand side of \eqref{parastein}. 
On the other hand, if $a \sim \pi(a|s)$ can be parameterized by $a = f_{\theta}(s, \xi)$, then $L_s(\theta) = \E_{\pi(\xi)}[\phi(s, f_\theta(s, \xi))]$, leading to the reparameterized gradient in \citet{kingma2013auto}: 
\begin{align}
\label{repara}
\nabla_\theta L_s(\theta)  
 = \E_{\pi(a,\xi|s)}\left[\nabla_\theta f_{\theta}(s,\xi) \nabla_a \phi(s,a)   \right ]. 
\end{align}
Equation \eqref{logg} and \eqref{repara} are equal to each other since both are $\nabla_\theta L_s(\theta) $. 
This provides another way to prove the identity in \eqref{parastein}. 
The connection between Stein's identity and the reparameterization trick that we reveal 
here is itself interesting, especially given that both of these two methods have been widely used in different areas.

\subsection{Constructing the Baseline Functions for Stein Control Variate}
\label{sec:constructing}
We need to develop practical approaches to choose the baseline functions $\phi$ in order to fully leverage the power of the flexible Stein control variate. In practice, we assume a flexible parametric form $\phi_w(s,a)$ with parameter $w$, e.g. linear functions or neural networks, and hope to optimize $w$ efficiently for variance reduction.
Here we introduce two approaches for optimizing $w$ and discuss some practical considerations. 

We should remark that if $\phi$ is constructed based on data $(s_t, a_t)_{t=1}^n$, it introduces additional dependency and \eqref{equ:bs} is no longer an unbiased estimator theoretically. However, the bias introduced this way is often negligible in practice (see e.g., Section 2.3.4 of \citet{Oates2016ControlFF}). For policy gradient, this bias can be avoided by estimating $\phi$ based on the data from the previous iteration. 

\paragraph{Estimating $\phi$ by Fitting Q Function} 
Eq \eqref{equ:mainn} provides an interpolation between the log-likelihood ratio policy gradient \eqref{equ:pg} (by taking $\phi = 0$)
and a reparameterized policy gradient as follows (by taking $\phi(s, a)= Q^{\pi}(s, a)$): 
\begin{equation}
\label{equ:dtg}
\nabla_\theta J(\theta) = \E_{\pi}[\nabla_\theta f(s, \xi) \nabla_a Q^\pi(s, a)].  
\end{equation}
It is well known that the reparameterized gradient tends to yield much smaller variance than the log-likelihood ratio gradient from the variational inference literature \citep[see e.g.,][]{kingma2013auto, rezende2014stochastic, Roeder2017StickingTL, tucker2017rebar}. 
An intuitive way to see this is to consider the extreme case when the policy is deterministic. 
In this case the variance of \eqref{logg} is infinite because $\log\pi(a|s)$ is either infinite or does not exist, 
while the variance of \eqref{repara} is zero because $\xi$ is deterministic. 
Because optimal policies often tend to be close to deterministic, the reparameterized gradient should be favored for smaller variance. 

Therefore, one natural approach is to set $\phi$ to be close to Q function, that is, $\phi(s,a) = \hat Q^\pi(s, a)$ so that the log-likelihood ratio term is small. 
Any methods for Q function estimation can be used. In our experiments, we optimize the parameter $w$ in $\phi_w(s,a)$ by 
\begin{align}\label{equ:reg}
\min_{w} \sum_{t=1}^n (\phi_w(s_t,a_t) - R_t)^2, 
\end{align}
where $R_t$ an estimate of the reward starting from $(s_t,a_t)$. 
It is worth noticing that with deterministic policies, \eqref{equ:dtg} is simplified to the update of deep deterministic policy gradient (DDPG)  \citep{Lillicrap2015ContinuousCW,silver2014deterministic}. 
However, DDPG directly plugs an estimator $\hat Q^\pi(s,a)$ into \eqref{equ:dtg} to estimate the gradient, which may introduce a large bias; our formula \eqref{equ:mainn} 
can be viewed as correcting this bias 
in the reparameterized gradient
using the log-likelihood ratio term. 

\paragraph{Estimating $\phi$ by Minimizing the Variance}
Another approach for obtaining $\phi$ is to directly minimize the variance of the gradient estimator. 
Note that $\var(\hat \nabla_\theta J(\theta) ) = \E[(\hat \nabla_\theta J(\theta))^2] - \E[\hat \nabla_\theta J(\theta)]^2$.
Since $\E[\hat\nabla_{\theta} J(\theta)] = \nabla_\theta J(\theta)$ which does not depend on $\phi$, it is sufficient to minimize the first term. 
Specifically, 
for $\phi_w(s,a)$ we optimize $w$ by 
\begin{align}\label{equ:minvar}
\min_w 
\sum_{t=1}^n 
 \left \|\nabla_\theta \log \pi(a_t~|~s_t) \left(\hat Q^\pi(s_t, a_t)- \phi_w(s_t, a_t)\right) + \nabla_\theta f(s_t, \xi_t) \nabla_a\phi_w(s_t, a_t)\right \|_2^2.  
\end{align}
In practice, we find that it is difficult to implement this efficiently using the auto-differentiation in the current deep learning platforms because it involves derivatives w.r.t. both $\theta$ and $a$. We develop a computational efficient approximation for the special case of Gaussian policy in Section~\ref{sec:gaussian}. 

\paragraph{Architectures of $\phi$}
Given the similarity between $\phi$ and the Q function as we mentioned above, we may decompose $\phi$ into 
$$
\phi_w(s,a) = \hat V^\pi(s) ~+~ \psi_w(s,a).
$$
The term $\hat V^\pi(s)$ is parametric function approximation of the value function which we separately estimate in the same way as in A2C, 
and $w$ is optimized using the method above with fixed $\hat V^\pi(s)$. 
Here the function $\psi_w(s,a)$ can be viewed as an estimate of the advantage function, whose parameter $w$ is optimized using the two optimization methods introduced above. To see, we rewrite our gradient estimator to be 
\begin{align}\label{equ:hatmAdv}
\!\!\!\!\! \hat \nabla_\theta J(\theta) 
= \frac{1}{n}
\sum_{t=1}^n 
\left[ \nabla_\theta \log \pi(a_t~|~s_t) (\hat A^\pi(s_t, a_t) - \psi_w(s_t, a_t)) + \nabla_\theta f_\theta(s_t, \xi_t) \nabla_a\psi_w(s_t, a_t)\right ], 
\end{align}
where $\hat A^\pi(s_t, a_t) = \hat Q^\pi(s_t,a_t) - \hat V^\pi(s_t)$ is an estimator of the advantage function. 
If we set $\psi_w(s,a) = 0$, then Eq \eqref{equ:hatmAdv} clearly reduces to A2C. 
We find that separating $\hat V^\pi(s)$ from $\psi_w(s, a)$ works well in practice, because it effectively provides a useful initial estimation of $\phi$, and allows us to directly improve the $\phi$ on top of the value function baseline. 

\subsection{Stein control variate for Gaussian Policies}\label{sec:gaussian}
Gaussian policies have been widely used and are shown to perform efficiently in many practical continuous reinforcement learning settings. Because of their wide applicability, we derive the gradient estimator with the Stein control variate for Gaussian policies here and use it in our experiments. 

Specifically, Gaussian policies take the form $\pi(a~|~s)= \normal(a; ~  \mu_{\theta_1}(s), \Sigma_{\theta_2}(s))$, 
where mean $\mu$ and covariance matrix $\Sigma$ are often assumed to be parametric functions 
with parameters $\theta = [\theta_1, ~ \theta_2]$. 
This is equivalent to generating $a$ by $a = f_\theta(s, \xi) = \mu_{\theta_1}(s) + \Sigma_{\theta_2}(s)^{1/2}\xi$, where $\xi \sim \mathcal{N}(0, 1)$. 
Following Eq \eqref{equ:mainn}, the policy gradient w.r.t. the mean parameter $\theta_1$ is
\begin{align}\label{equ:muJ}
\nabla_{\theta_1} J(\theta) = 
\E_{\pi}\left[\nabla_{\theta_1} \log \pi(a|s)\left(Q^\pi(s,a) -\phi(s,a)\right) ~~+ ~~\nabla_{\theta_1} \mu(s)~ \nabla_a \phi(s,a)\right]. 
\end{align}
For each coordinate $\theta_{\ell}$ in the variance parameter $\theta_2$,  
its gradient is computed as
\begin{small}
\begin{align}\label{equ:sigmaJ}
\nabla_{\theta_{\ell}}J(\theta)
&=  \E_{\pi}\left[\nabla_{\theta_{\ell}} \log \pi(a|s)\left(Q^\pi(s,a) - \phi(s, a)\right) -
\frac{1}{2} \left \langle \nabla_{a}\log \pi(a|s)\nabla_{a}\phi(s, a)^\top,  \nabla_{\theta_{\ell}}\Sigma \right \rangle \right],
\end{align}
\end{small}%
where $\langle A, ~ B\rangle  \defeq \trace(AB)$ for two $d_a \times d_a$ matrices. 

Note that the second term in \eqref{equ:sigmaJ} contains $\nabla_a\log \pi(a|s)$; we can further apply Stein's identity on it to obtain a simplified formula %
\begin{align}\label{equ:sigmaJ2}
\nabla_{\theta_{\ell}}J(\theta) 
&=  \E_{\pi}\left[\nabla_{\theta_{\ell}} \log \pi(a|s)\left(Q^\pi(s,a) - \phi(s, a)\right) 
~+~ \frac{1}{2} \left \langle \nabla_{a,a}\phi(s, a), ~  \nabla_{\theta_{\ell}}\Sigma \right \rangle \right]. 
\end{align}

The estimator in \eqref{equ:sigmaJ2} requires to evaluate the second order derivative $\nabla_{a,a}\phi$, but 
may have lower variance compared to that in \eqref{equ:sigmaJ}.
To see this, note that if $\phi(s,a)$ is a linear function of $a$ (like the case of Q-prop), 
then the second term in \eqref{equ:sigmaJ2} vanishes to zero, while that in \eqref{equ:sigmaJ} does not. 

We also find it is practically convenient to estimate the parameters $w$ in $\phi$
by minimizing $\var(\hat \nabla_\mu J) +\var(\hat \nabla_\Sigma J)$, instead of the exact variance $\var(\hat \nabla_\theta J)$. 
Further details can be found in Appendix~\ref{sec:appGminvar}. 

\subsection{PPO with Stein control variate}
\label{sec:ppo}
Proximal Policy Optimization (PPO) \citep{Schulman2017ProximalPO, heess2017emergence} 
is recently introduced for policy optimization. It uses a 
proximal Kullback-Leibler (KL) divergence penalty to regularize and stabilize the policy gradient update. 
Given an existing policy $\pi_{\text{old}}$, PPO obtains a new policy by maximizing the following surrogate loss function
\begin{align*}
J_{\text{ppo}}(\theta) &= \E_{\pi_{\text{old}}}
\left[\frac{\pi_\theta(a|s)}{\pi_{\text{old}}(a|s)} Q^\pi(s, a) - \lambda \mathrm{KL}\left[\pi_{\text{old}}(\cdot|s)~~||~~\pi_{\theta}(\cdot|s)\right]\right], 
\end{align*}
where the first term is an approximation of the expected reward, and the second term enforces the
the updated policy to be close to the previous policy under KL divergence. The gradient of $J_{\text{ppo}}(\theta)$ 
can be rewritten as  
$$
\nabla_\theta J_{\text{ppo}}(\theta) = 
\E_{\pi_{\text{old}}}
\bigg[w_{\pi}(s,a) \nabla_{\theta} \log \pi(a|s) Q_\lambda^\pi(s, a)
\bigg]
$$
where $w_{\pi}(s,a) \defeq {\pi_\theta(a|s)}/{\pi_{\text{old}}(a|s)}$ 
is the density ratio of the two polices, 
and $Q^\pi_\lambda(s, a) \defeq Q^\pi(s, a) +  \lambda  w_\pi(s,a)^{-1}$ where the second term comes from 
the KL penalty. 
Note that $\E_{\pi_{\text{old}}}[w_\pi(s,a) f(s,a)] =  \E_{\pi}[f(s,a)]$ by canceling the density ratio. 
Applying \eqref{parastein}, we obtain
\begin{small}
\begin{align}\label{equ:djppo}
\nabla_\theta J_{\text{ppo}}(\theta) = 
\E_{\pi_{\text{old}}}
\bigg[w_{\pi}(s,a)\bigg( \nabla_{\theta} \log \pi(a|s) \big(Q_\lambda^\pi(s, a)-\phi(s,a)\big) ~ + ~ \nabla_\theta f_\theta(s, a)\nabla_a\phi(s, a)\bigg)  \bigg].
\end{align}
\end{small}



Putting everything together, we summarize our PPO algorithm with Stein control variates in Algorithm \ref{alg:SteinPPO}. 
It is also straightforward to integrate the Stein control variate with TRPO. 

\begin{algorithm}[tb]
\caption{PPO with Control Variate through Stein's Identity {\small (the PPO procedure is adapted from Algorithm 1 in \citealt{heess2017emergence})}} 
\label{alg:SteinPPO}
\footnotesize
\begin{algorithmic}
    \REPEAT
        \STATE Run policy $\pi_\theta$ for $n$ timesteps, collecting $\{s_t,a_t, \xi_t, r_t\}$, where $\xi_t$ is the random seed that generates action $a_t$, i.e., $a_t = f_\theta(s_t, \xi_t)$. 
        Set $\pi_{\text{old}} \gets \pi_\theta.$
        
        \vspace{.5\baselineskip}
            \STATE \emph{// Updating the baseline function $\phi$}
        \FOR {K iterations}        
        	\STATE Update $w$ by one stochastic gradient descent step according to \eqref{equ:reg}, or \eqref{equ:minvar}, or \eqref{equ:minvarGauss} for Gaussian policies. 
        \ENDFOR

        \vspace{.5\baselineskip}
        \STATE \emph{// Updating the policy $\pi$}
        \FOR {M iterations}
            \STATE Update $\theta$ by one stochastic gradient descent step with \eqref{equ:djppo} (adapting it with \eqref{equ:muJ} and \eqref{equ:sigmaJ2} for Gaussian policies). 
		\ENDFOR
		
		\vspace{.5\baselineskip}
        \STATE \emph{// Adjust the KL penalty coefficient $\lambda$}
		\IF {$\mathrm{KL}[\pi_{\text{old}}|\pi_{\theta}] > \beta_{\text{high}}\mathrm{KL}_{\text{target}}$}
		\STATE  $\lambda \leftarrow \alpha \lambda$
		\ELSIF {$\mathrm{KL}[\pi_{\text{old}}|\pi_{\theta}] < \beta_{\text{low}}\mathrm{KL}_{\text{target}}$}
		\STATE  $\lambda \leftarrow  \lambda/\alpha $
		\ENDIF
	\UNTIL{Convergence}
\end{algorithmic}
\end{algorithm}

\textbf{}

\section{Related Work}
\label{sec:related}
Stein's identity has been shown to be a powerful tool in many areas of statistical learning and inference. An incomplete list includes 
 \citet{gorham2015measuring},  \citet{oates2017control}, \citet{oates2016convergence}, \citet{chwialkowski2016kernel}, \citet{liu2016kernelized}, \citet{Sedghi2016ProvableTM}, 
\citet{Liu2016SteinVG}, 
\citet{feng2017learning}, 
\citet{liu2016black}. 
This work was originally motivated by \citet{oates2017control}, 
which uses  Stein's identity as a control variate 
for general Monte Carlo estimation. 
However, as discussed in Section~\ref{sec:stein}, 
the original formulation of  Stein's identity can not be directly applied to policy gradient, 
and we need the mechanism introduced in \eqref{parastein} that also connects to the reparameterization trick \citep{kingma2013auto, rezende2014stochastic}.

Control variate method is one of the most widely used variance reduction techniques in policy gradient \citep[see e.g.,][]{greensmith2004variance}. 
However, action-dependent baselines have not yet been well studied.
Besides Q-prop \citep{Gu2016QPropSP} which we draw close connection to, 
the work of \citet{Thomas2017PolicyGM} also suggests a way to incorporate action-dependent baselines, 
but is restricted to the case of compatible function approximation. 
More recently, \citet{tucker2017rebar} studied a related action-dependent control variate for discrete variables in learning latent variable models. 

Recently, \citet{gu2017interpolated} proposed an interpolated policy gradient (IPG) framework for integrating  on-policy and off-policy estimates that generalizes various algorithms including DDPG and Q-prop. If we set $\nu = 1$ and $p^\pi = p^\beta$ (corresponding to using off-policy data purely) in IPG, it reduces to a special case of \eqref{equ:hatmAdv} with $\psi_w(s,a) = Q_w(s,a)-\E_{\pi(a|s)}[Q_w(s,a)]$ where $Q_w$ an approximation of the Q-function. 
However, the emphasis of \citet{gu2017interpolated}
is on integrating on-policy and off-policy estimates, generally yielding theoretical bias, and the results of the case when $\nu=1$ and $p^\pi = p^\beta$ were not reported. 
Our work presents the result that shows significant improvement of sample efficiency in policy gradient by using nonlinear, action-dependent control variates. 

In parallel to our work, there have been some other works discovered action-dependent baselines for policy-gradient methods in reinforcement learning. Such works include \citet{grathwohl2017backpropagation} which train an action-dependent baseline for both discrete and continuous control tasks. \citet{wu2018variance} exploit per-dimension independence of the action distribution to produce an action-dependent baseline in continuous control tasks.

\section{Experiments}
\label{sec:experiment}
We evaluated our control variate method when combining with PPO and TRPO 
on continuous control environments from the OpenAI Gym benchmark \citep{openai_gym} using the MuJoCo physics simulator \citep{todorov2012mujoco}. 
We show that by using our more flexible baseline functions, we can significantly improve the sample efficiency compared with methods based on the typical value function baseline and Q-prop.

All our experiments use Gaussian policies. 
As suggested in Section~\ref{sec:constructing}, 
we assume the baseline to have a form of $\phi_w(s,a) = \hat V^\pi(s) + \aw(s,a)$, where 
$\hat V^\pi$ is the valued function estimated separately in the same way as the value function baseline,  and $\aw(s,a)$ is a parametric function whose value $w$ is decided by minimizing either Eq \eqref{equ:reg} (denoted by \texttt{FitQ}), 
or Eq \eqref{equ:minvarGauss} designed for Gaussian policy (denoted by \texttt{MinVar}).  
We tested three different architectures of $\aw(s,a)$, including 

\texttt{Linear}. $\aw(s,a) = \langle \nabla_a q_w(a,\mu_\pi(s)), ~ (a - \mu_\pi(s))\rangle$, where $q_w$ is a parametric function designed for estimating the Q function $Q^\pi$. This structure is motivated by Q-prop, 
which estimates $w$ by fitting $q_w(s, a)$ with $Q^\pi$. 
Our \texttt{MinVar}, and \texttt{FitQ} methods are different in that they 
optimize $w$ as a part of $\phi_w(s,a)$ by minimizing the objective in Eq \eqref{equ:reg} and Eq \eqref{equ:minvarGauss}. 
We show in Section~\ref{sec:trpo} that our optimization methods yield better performance than Q-prop even with the same architecture of $\psi_w$. 
This is because our methods directly optimize for the baseline function $\phi_w(s,a)$, instead of $q_w(s,a)$ which serves an intermediate step. 

\texttt{Quadratic}. $\aw(s,a) = - (a-\mu_w(s))^\top \Sigma_w^{-1}(a-\mu_w(s))$. In our experiments, we set $\mu_w(s)$ to be a neural network, and $\Sigma_w$ a positive diagonal matrix that is independent of the state $s$. 
This is motivated by the normalized advantage function in \citet{gu2016continuous}. 

\texttt{MLP}. $\aw(s,a)$ is assumed to be a neural network in which we first encode the state $s$ with a hidden layer, and then concatenate it with the action $a$ and pass them into another hidden layer before the output. 

Further, we denote by \texttt{Value} the typical value function baseline, which corresponds to setting $\psi_w(s,a) = 0$ in our case. 
For the variance parameters $\theta_2$ of the Gaussian policy, 
we use formula \eqref{equ:sigmaJ2} for \texttt{Linear} and \texttt{Quadratic}, 
but \eqref{equ:sigmaJ} for \texttt{MLP} due to the difficulty of calculating the second order derivative $\nabla_{a, a}\psi_w(s,a)$ in MLP. 
All the results we report are averaged over three random seeds. 
See Appendix for implementation details.  

\subsection{Comparing the Variance of different Gradient Estimators}
We start with comparing the variance of the gradient estimators 
with different control variates. Figure~\ref{fig:exp_evaluation} shows the results 
on Walker2d-v1, when we take a fixed policy obtained by running the vanilla PPO for 2000 steps and evaluate the variance of the different gradient estimators under different sample size $n$. 
In order to obtain unbiased estimates of the variance, we estimate all the baseline functions using a hold-out dataset with a large sample size. We find that our methods, especially those using the \texttt{MLP} and \texttt{Quadratic} baselines, obtain significantly lower variance than the typical value function baseline methods.
In our other experiments of policy optimization, we used the data from the current policy to estimate $\phi$, which introduces a small bias theoretically, but was found to perform well empirically (see Appendix~\ref{sec:appphi} for more discussion on this issue). 

\begin{figure}[t]
\centering
{
\setlength{\tabcolsep}{3pt} 
\renewcommand{\arraystretch}{1} 
\begin{tabu}{ccc}
\raisebox{5.2em}{\rotatebox{90}{\small Log MSE}}
\includegraphics[width=.33\textwidth]{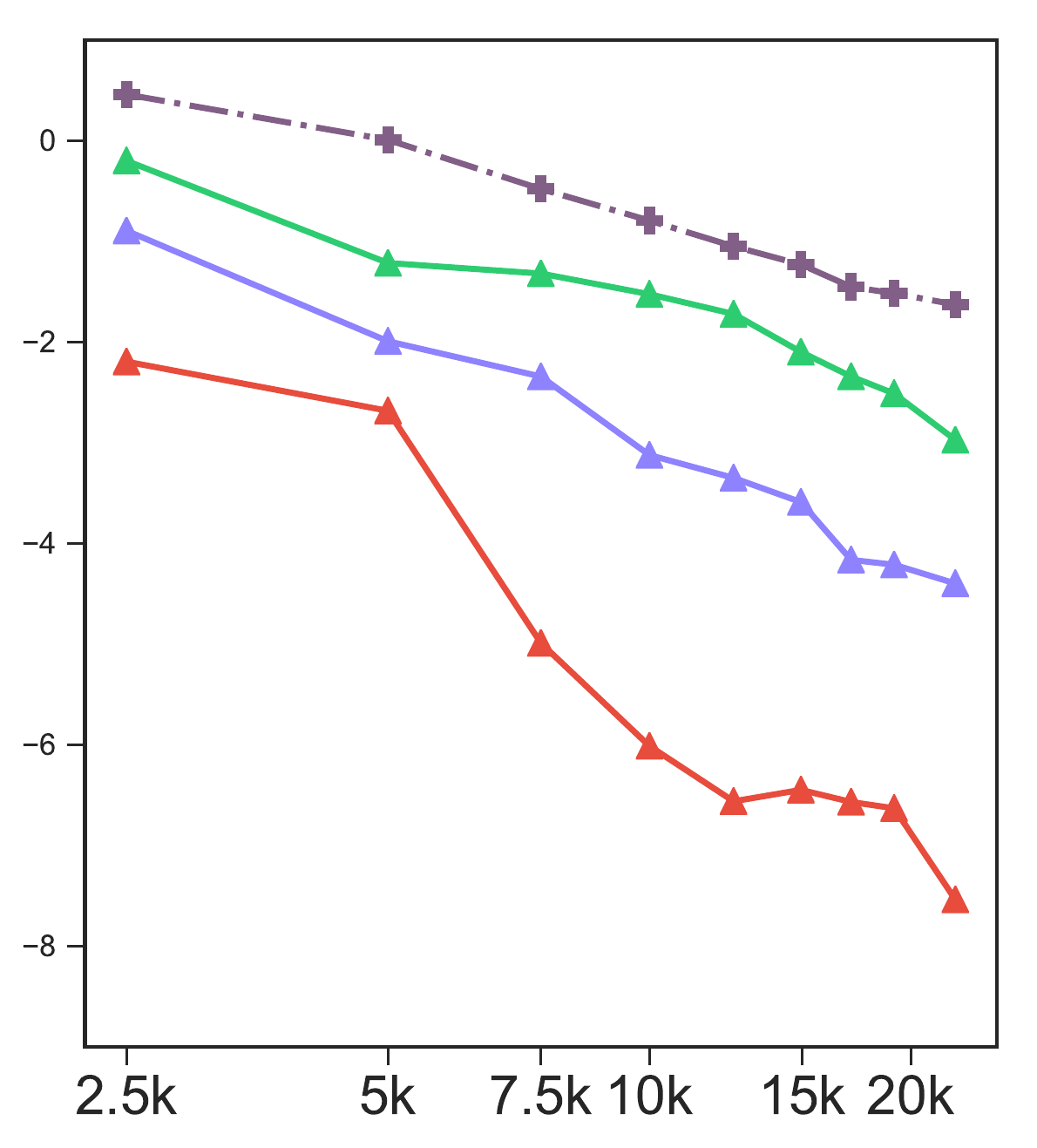} &
\includegraphics[width=.33\textwidth]{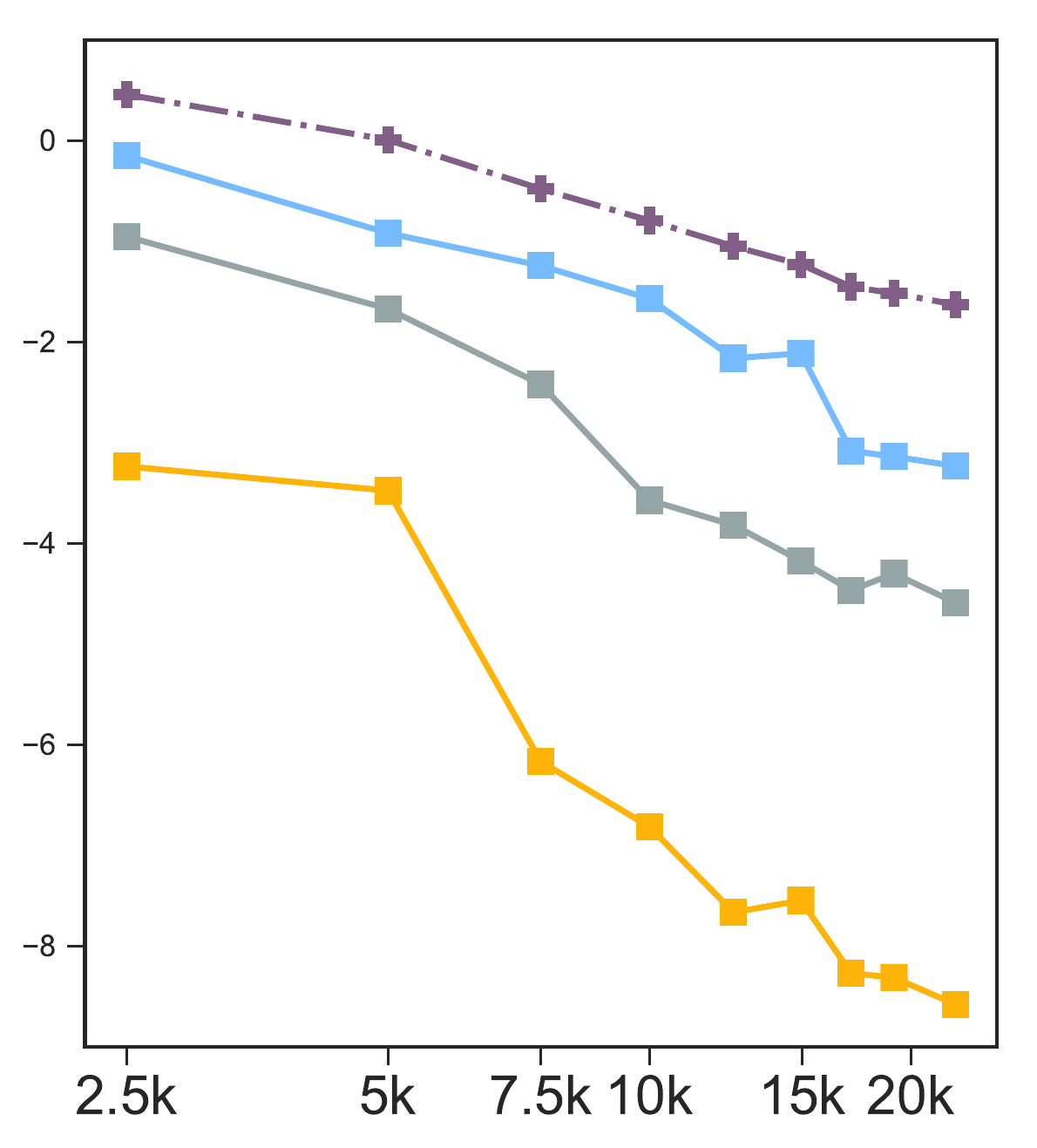} &  
\hspace{-1em}
\raisebox{7em}{\includegraphics[width=.15\textwidth]{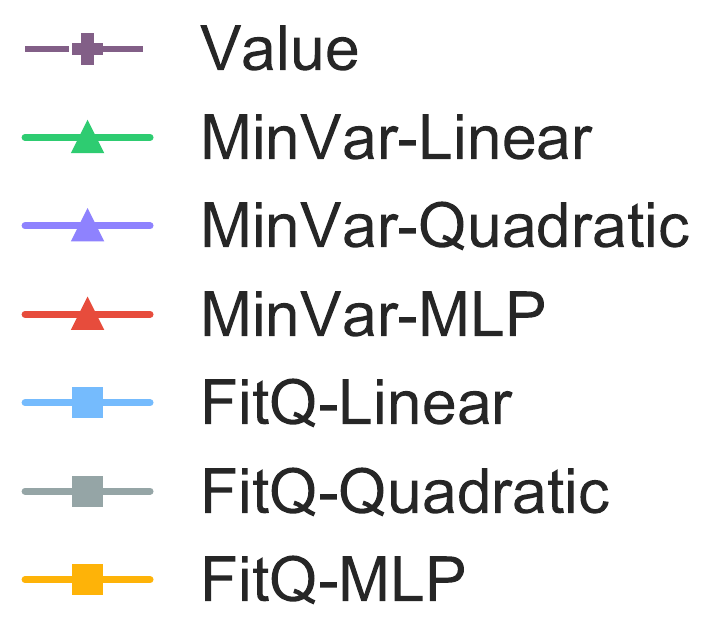}}\\
\small{Sample size}& \small{Sample size} \\
\end{tabu}}
\vspace{-5bp}
\caption{\small{The variance of gradient estimators of different control variates under a fixed policy obtained by running vanilla PPO for 200 iterations in the Walker2d-v1 environment.}}
\label{fig:exp_evaluation}
\end{figure}

\begin{figure}[t]
\centering
\setbox1=\hbox{\includegraphics[height=3.4cm]{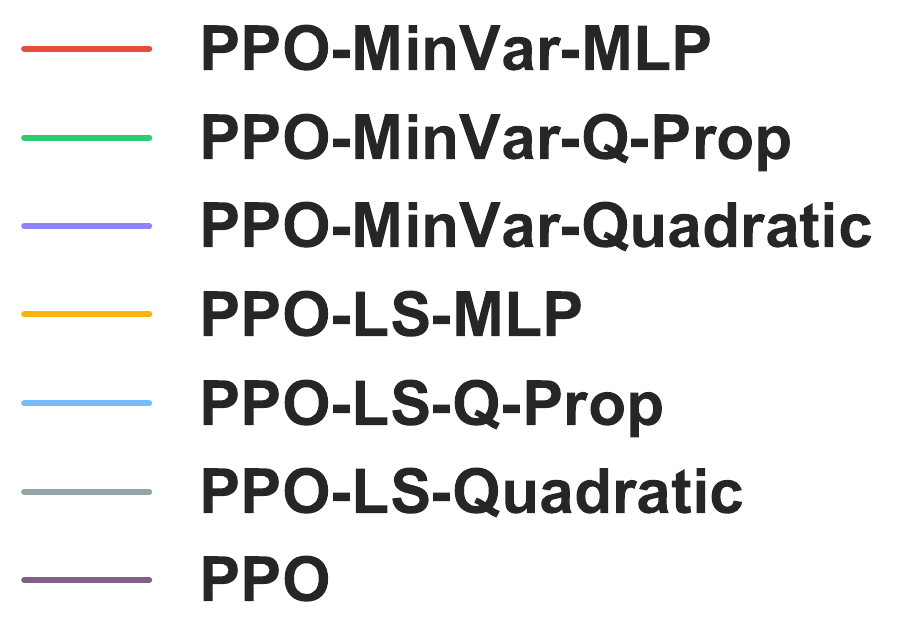}}
{
\setlength{\tabcolsep}{12pt} 
\renewcommand{\arraystretch}{1} 
\begin{tabu}{cc}
\tiny{Hopper-v1}& \tiny{Walker2d-v1}\\
\includegraphics[width=.412\textwidth]{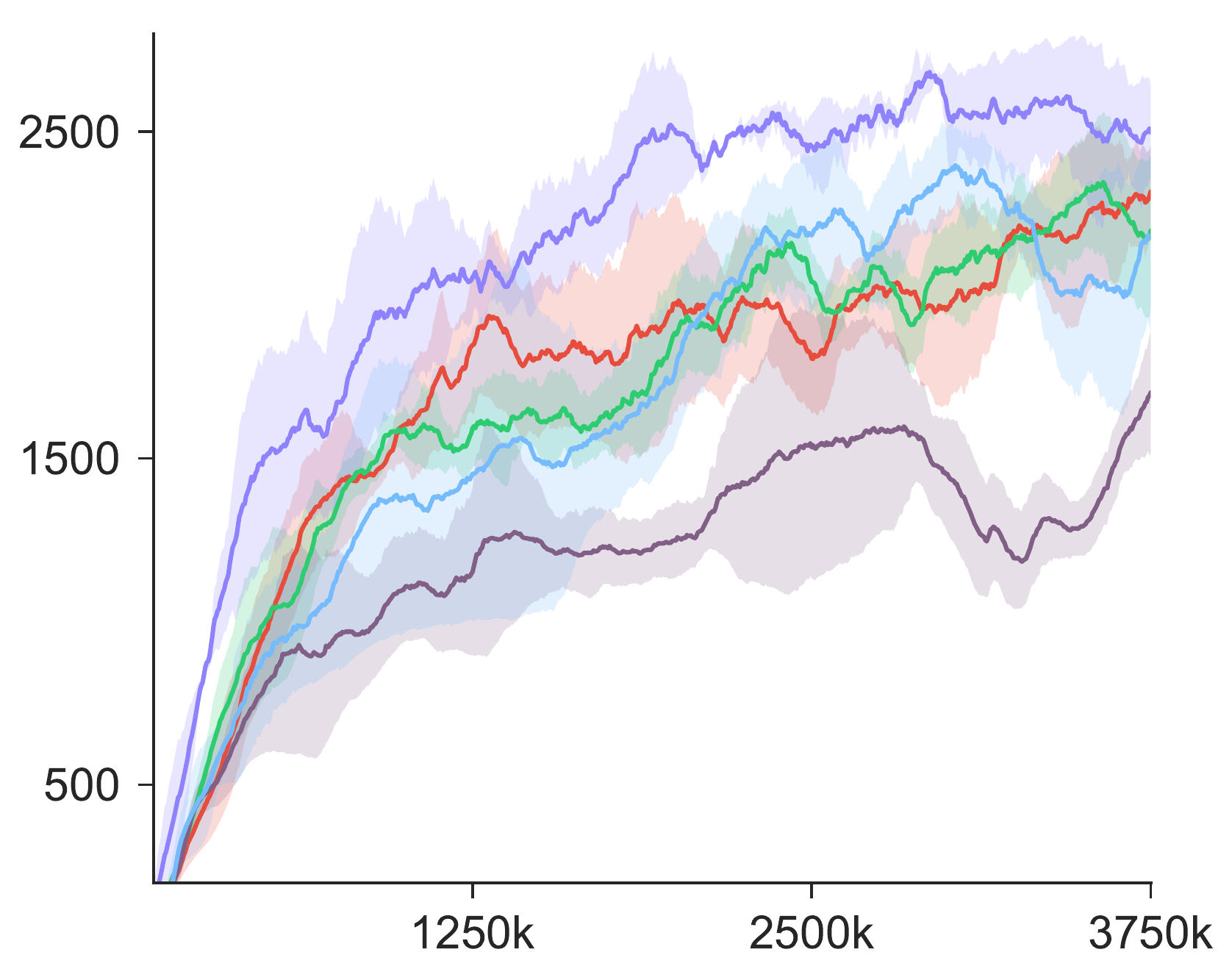} &
\includegraphics[width=.395\textwidth]{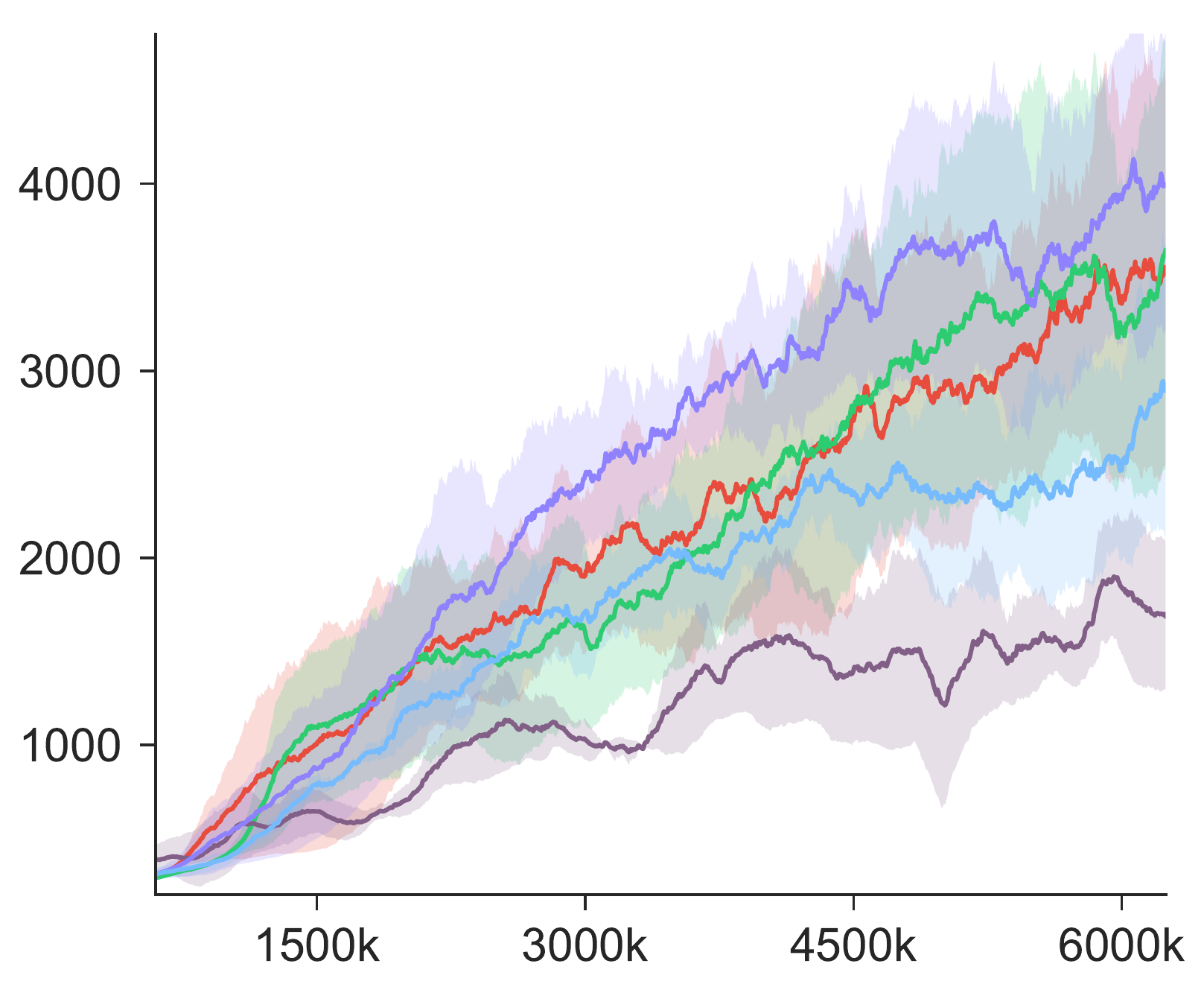}
\llap{\makebox[\wd1][l]{\raisebox{3.2cm}{\includegraphics[height=1.28cm]{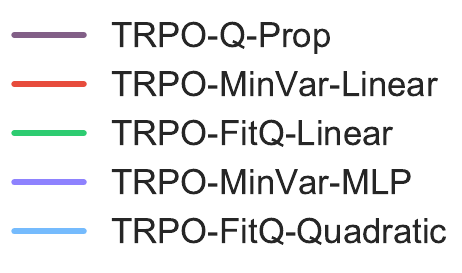}}}}
\end{tabu}}
\vspace{-10bp}
\caption{\small{Evaluation of TRPO with Q-prop and Stein control variates on Hopper-v1 and Walker2d-v1.}}
\label{fig:stein_trpo}
\end{figure}

\begin{table}[t]
    \centering
    \begin{tabular}{c|ll|ll}
        \hline      
        &     \multicolumn{2}{c|}{Humanoid-v1}&  \multicolumn{2}{c}{HumanoidStandup-v1}\\
     \hline
      Function & ~~~MinVar & ~~~~FitQ & ~~~~~~~ MinVar & ~~~~~~~~~FitQ \\
        \rule{0pt}{2.0ex}
      MLP & \pmb{3847 $\pm$ 249.3} & 3334 $\pm$ 695.7 & \pmb{143314 $\pm$ 9471} & 139315 $\pm$ 10527\\
      Quadratic & 2356 $\pm$ 294.7 & \pmb{3563 $\pm$ 235.1} & 117962 $\pm$ 5798 & \pmb{141692 $\pm$ 3489}\\
      Linear & 2547 $\pm$ 701.8 & 3404 $\pm$ 813.1 & 129393 $\pm$ 18574 & 132112 $\pm$ 11450 \\
      \hline
      Value &\multicolumn{2}{c|}{2207 $\pm$ 554} & \multicolumn{2}{c}{128765 $\pm$ 13440}\\
      
    \end{tabular}
    \caption{Results of different control variates and methods for optimizing $\phi$, when combined with PPO. 
    The reported results are the average reward at the 10000k-th time step on Humanoid-v1 and the 5000k-th time step on HumanoidStandup-v1.
    }
    \label{tab:tab1}
\end{table}

\subsection{Comparison with Q-prop using TRPO for policy optimization} 
\label{sec:trpo}
Next we want to check whether our Stein control variate will improve the sample efficiency of  policy gradient methods over existing control variate, e.g. Q-prop \citep{Gu2016QPropSP}. One major advantage of Q-prop is that it can leverage the off-policy data to estimate $q_w(s,a)$. Here we compare our methods with the original implementation of Q-prop which incorporate this feature for policy optimization. 
Because the best existing version of Q-prop is implemented with TRPO, we implement a variant of our method with TRPO for fair comparison. 
The results on  {Hopper-v1} and {Walker2d-v1} are shown in Figure~\ref{fig:stein_trpo}, where we find that all Stein control variates, even including \texttt{FitQ+Linear} and \texttt{MinVar+Linear}, outperform Q-prop on both tasks. 
This is somewhat surprising because the Q-prop compared here utilizes both on-policy and off-policy data to update $w$, while our methods use only on-policy data.   
We expect that we can further boost the performance by leveraging the off-policy data properly, which we leave it for future work. 
In addition, we noticed that the \texttt{Quadratic} baseline generally does not perform as well as it promises in Figure~\ref{fig:exp_evaluation}; this is probably because that in the setting of policy training we optimize $\phi_w$ for less number of iterations than what we do for evaluating a fixed policy in Figure~\ref{fig:exp_evaluation}, and it seems that \texttt{Quadratic} requires more iterations than \texttt{MLP} to converge well in practice. 


\begin{figure}[t]
\centering
{
\setlength{\tabcolsep}{1pt} 
\renewcommand{\arraystretch}{1} 
\begin{tabu}{ccc}
\tiny{HumanoidStandup-v1}& \tiny{Humanoid-v1} & \tiny{Walker2d-v1}  \\
\includegraphics[width=.33\textwidth]{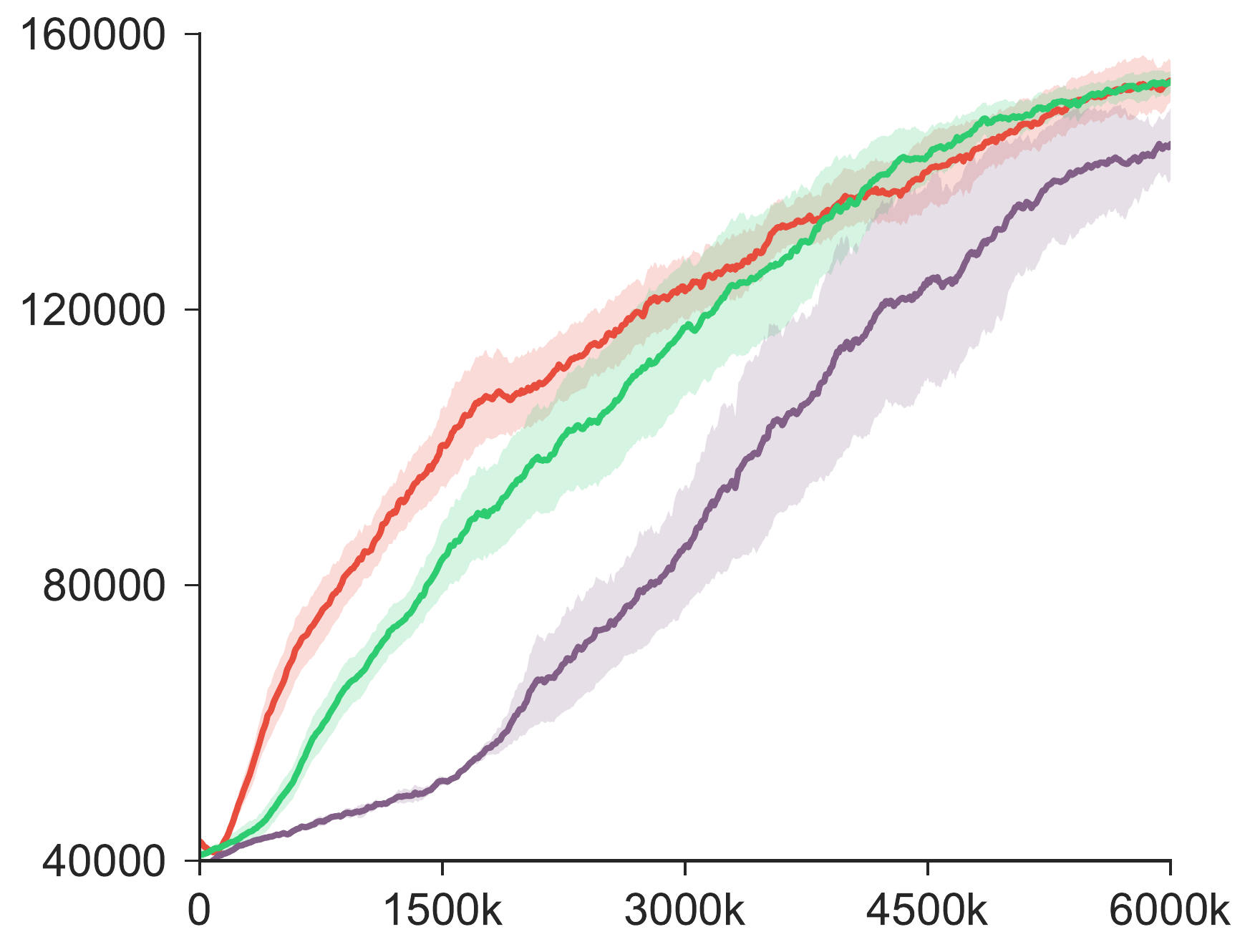}&
\includegraphics[width=0.325\textwidth]{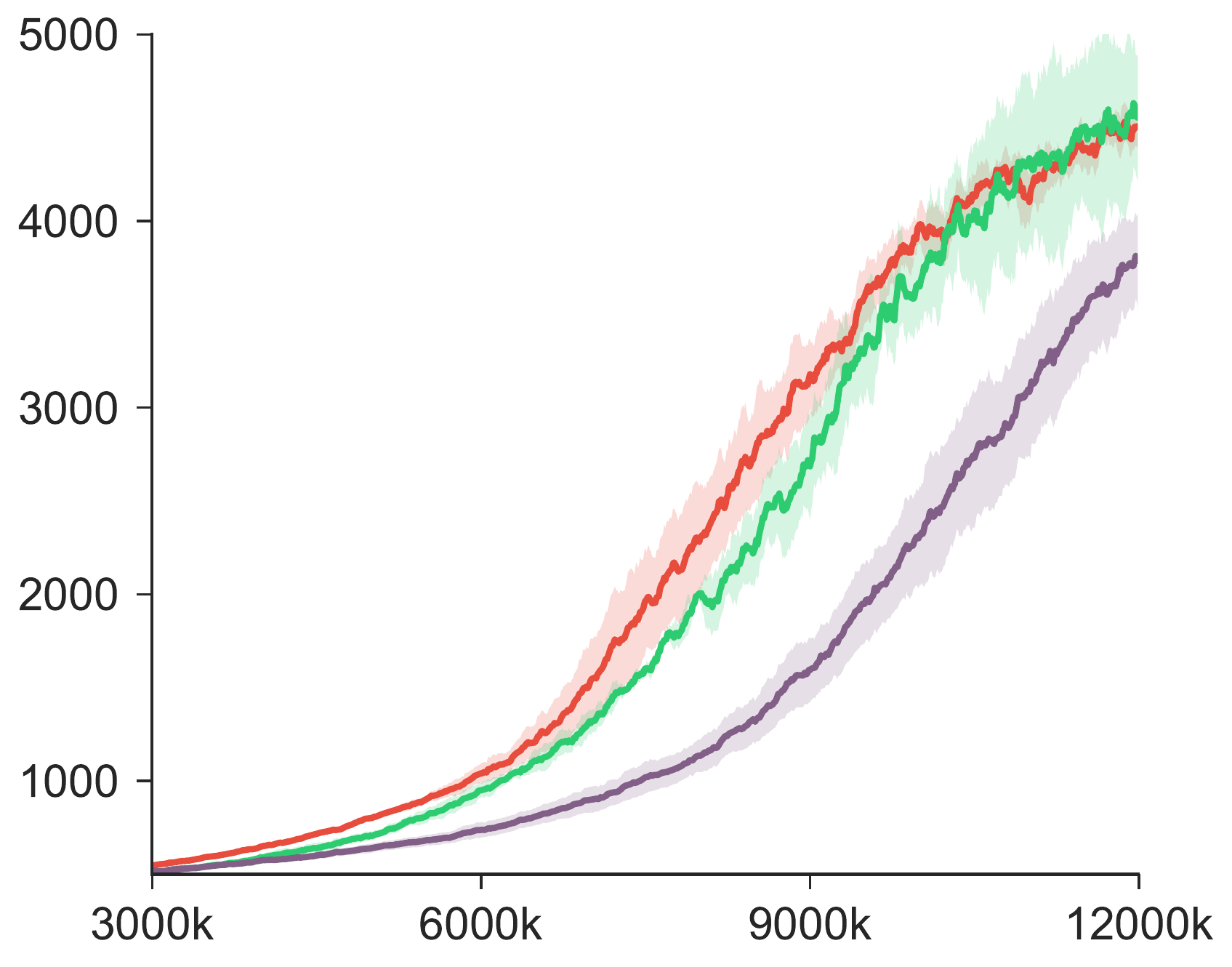}&
\includegraphics[width=.325\textwidth]{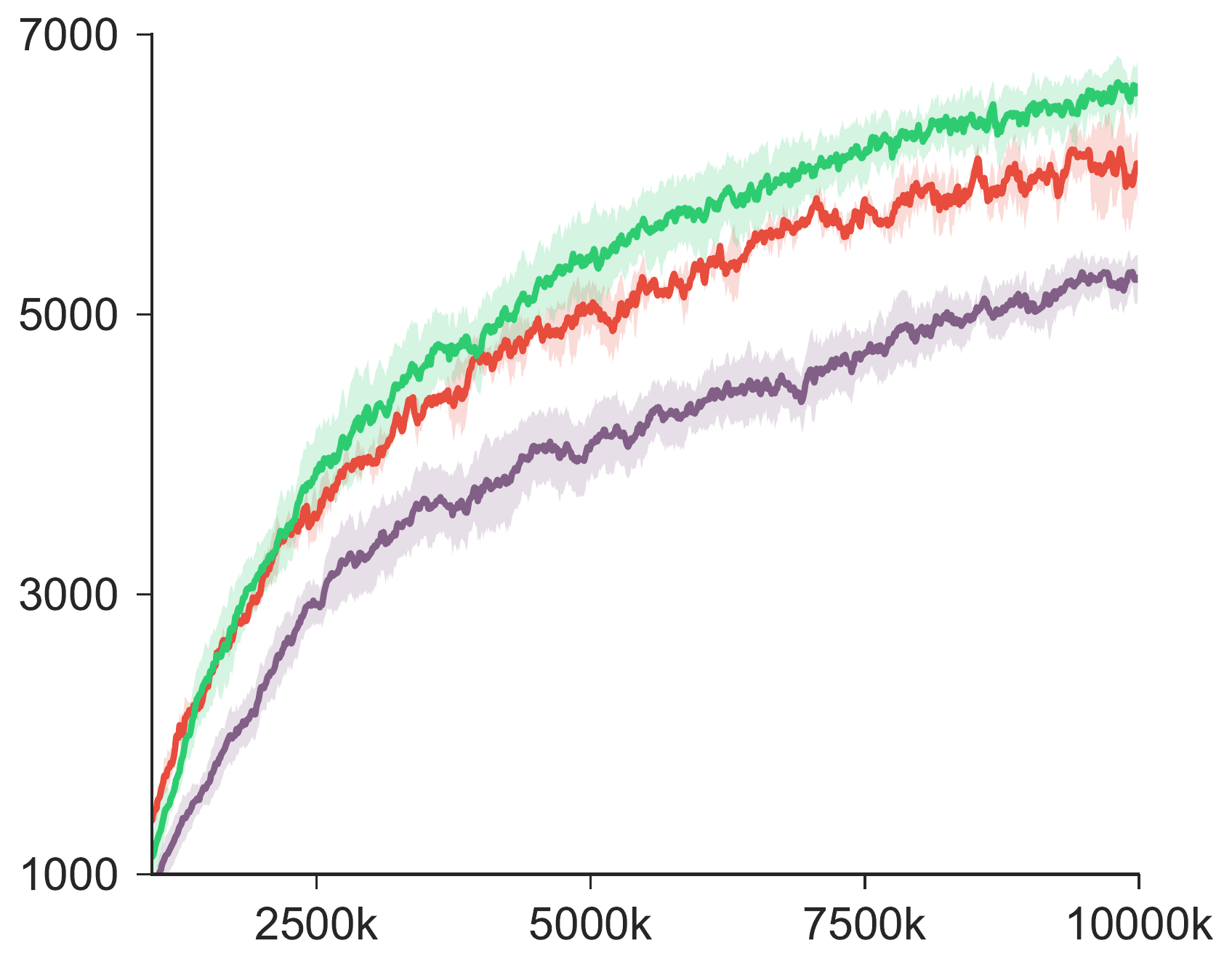} \\
\tiny{Ant-v1} &\tiny{Hopper-v1} & \tiny{HalfCheetah-v1} \\
\includegraphics[width=.325\textwidth]{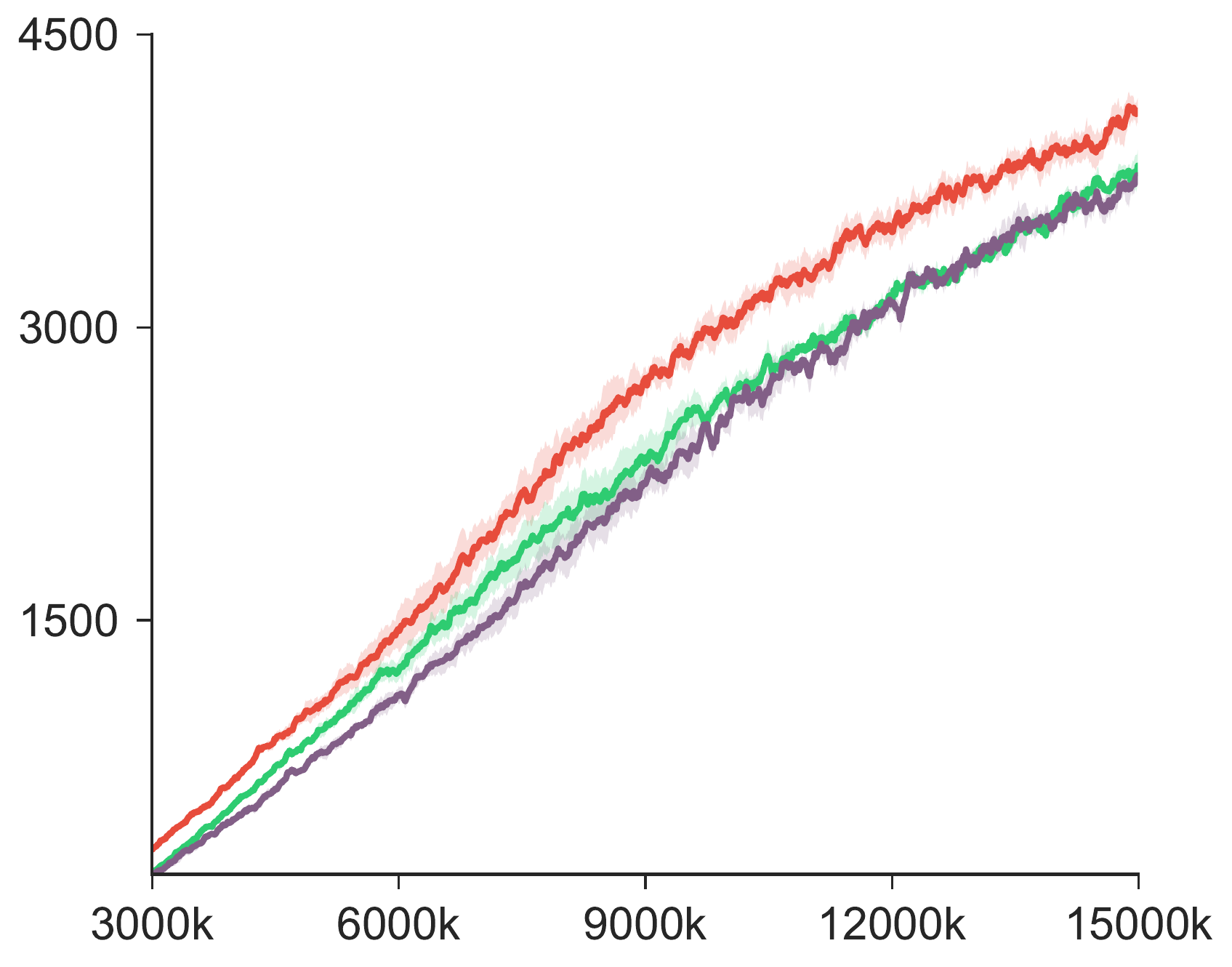} & 
\includegraphics[width=.325\textwidth]{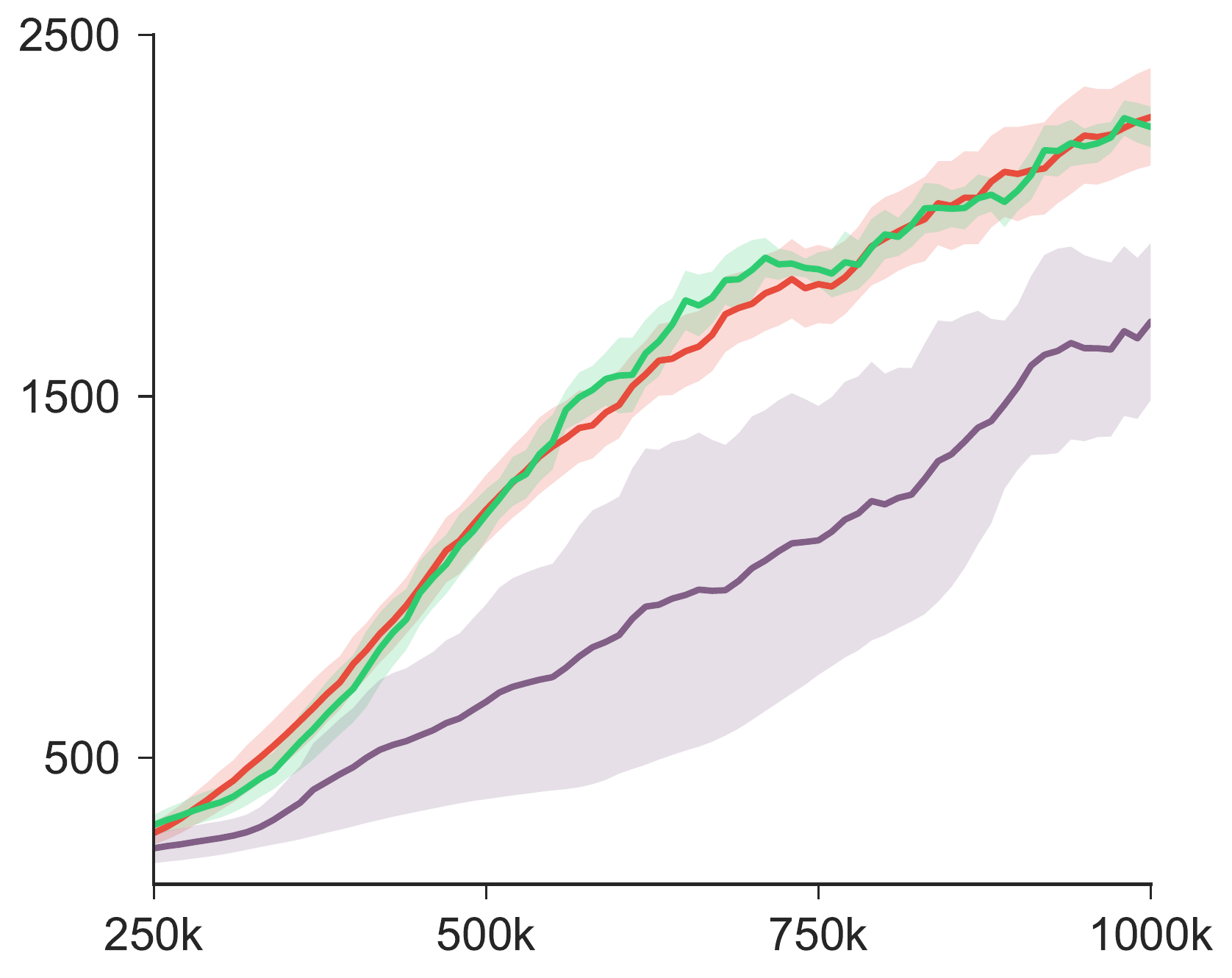} &
\includegraphics[width=.325\textwidth]{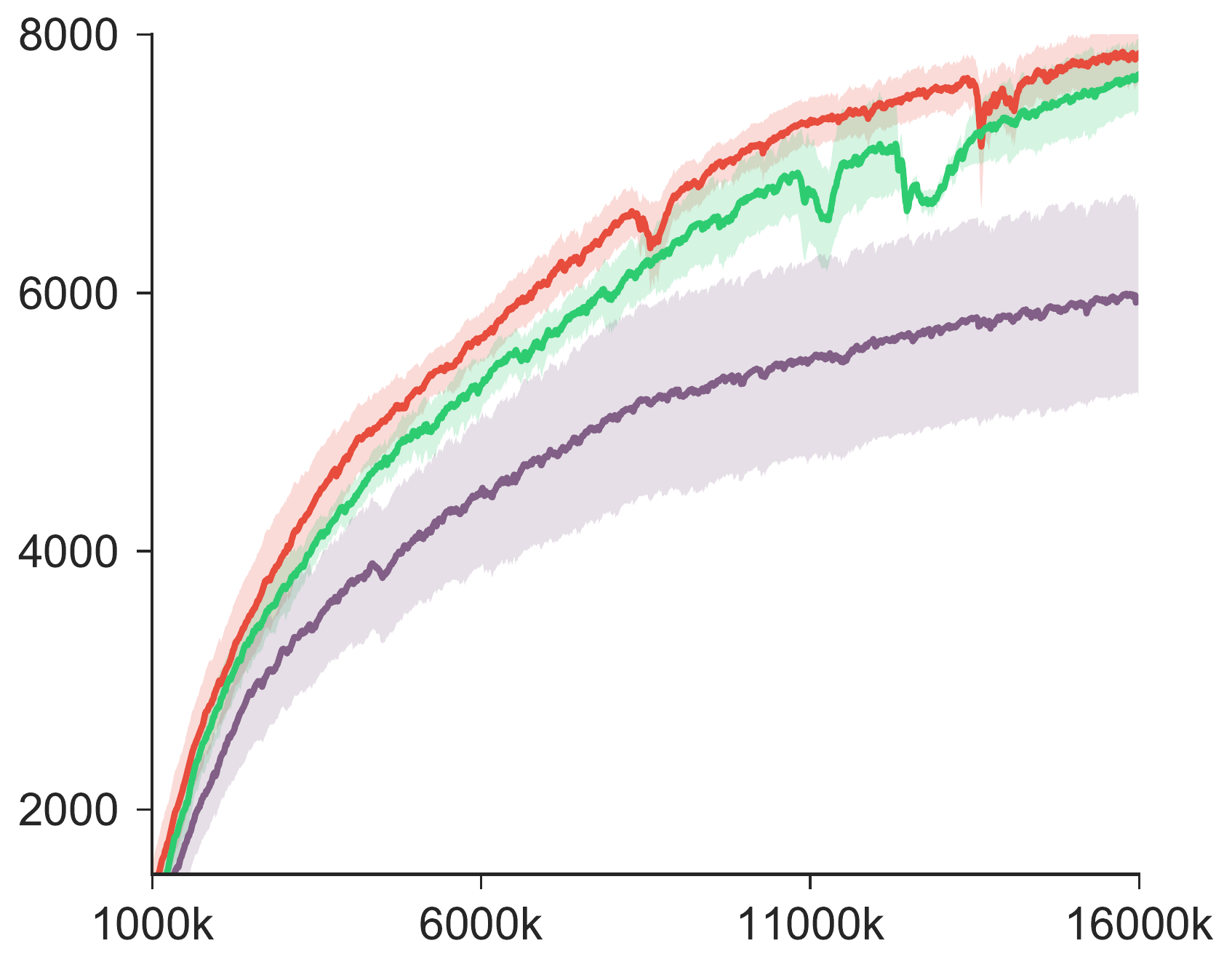}\hspace{-0.5em}\llap{\raisebox{1.3em}{\includegraphics[height=0.80cm]{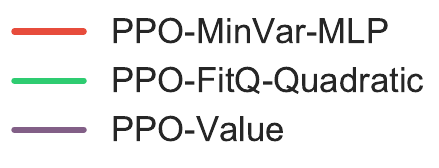}}} \\
\end{tabu}}
\vspace{-10bp}
\caption{\small{Evaluation of PPO with the value function baseline and Stein control variates across different Mujoco environments: HumanoidStandup-v1, Humanoid-v1, Walker2d-v1, Ant-v1 and Hopper-v1, HalfCheetah-v1.}}
\label{fig:other_Mujoco}
\end{figure}

\subsection{PPO with Different Control Variates}
\label{sec:ppoexperiment}
Finally, we evaluate the different Stein control variates with the more recent proximal policy optimization (PPO) method which generally outperforms TRPO. 
We first test all the three types of $\phi$ listed above on {Humanoid-v1} and {HumanoidStandup-v1}, 
and present the results in Table~\ref{tab:tab1}. 
We can see that all the three types of Stein control variates 
consistently outperform the value function baseline, and \texttt{Quadratic} and \texttt{MLP} tend to outperform 
\texttt{Linear} in general. 

We further evaluate our methods on a more extensive list of tasks shown in Figure \ref{fig:other_Mujoco}, 
where we only show the result of \texttt{PPO+MinVar+MLP} and \texttt{PPO+FitQ+MLP} which we find tend to perform the best according to Table~\ref{tab:tab1}. 
We can see that our methods can significantly outperform \texttt{PPO+Value} which is the vanilla PPO with the typical value function baseline \citep{heess2017emergence}. 

%
It seems that \texttt{MinVar} tends to work better with \texttt{MLP} while \texttt{FitQ} works better with \texttt{Quadratic} in our settings. 
In general, we find that \texttt{MinVar+MLP} tends to perform the best in most cases. Note that the \texttt{MinVar} here is based on minimizing the approximate objective \eqref{equ:minvarGauss} specific to Gaussian policy, and it is possible that we can further improve the performance by directly minimizing the exact objective in \eqref{equ:minvar} if an efficient implementation is made possible. We leave this a future direction. 

\section{Conclusion}
We developed the Stein control variate, a new and general variance reduction method for obtaining sample efficiency in policy gradient methods. Our method generalizes several previous approaches. We demonstrated its practical advantages over existing methods, including Q-prop and value-function control variate, in several challenging RL tasks. 
In the future, we will investigate how to further boost the performance by utilizing the off-policy data,  and search for  more efficient ways to optimize $\phi$. 
We would also like to point out that our method can be useful in other challenging optimization tasks such as variational inference and Bayesian optimization where gradient estimation from noisy data remains a major challenge. 

\bibliography{main_ref}
\bibliographystyle{iclr2018_conference}

\newpage
\section{Appendix}
\subsection{Proof of Theorem \ref{theorem:stein}}
\label{appendix:proof}
\begin{proof}
Assume $a = f_\theta(s, \xi) + \xi_0$ where $\xi_0$  is Gaussian noise $\normal(0, h^2)$ with a small variance $h$ that we will take $h\to 0^+$. 
Denote by $\pi(a, \xi|s)$ the joint density function of $(a,\xi)$ conditioned on $s$. It can be written as
$$
\pi(a, \xi | s)  = \pi(a | s, \xi) \pi(\xi) \propto \exp\left (-\frac{1}{2h^2}  ||a - f_\theta(s, \xi) ||^2\right ) \pi(\xi). 
$$
Taking the derivative of $\log \pi(a, \xi | s)$ w.r.t. $a$ gives
\begin{align*}
\nabla_a \log \pi(a, \xi | s) = - \frac{1}{h^2} \left(a -f_\theta(s, \xi)\right).
\end{align*}
Similarly, taking the derivative of $\log \pi(a, \xi | s)$ w.r.t. $\theta$, we have
\begin{align*}
\nabla_\theta\log \pi(a, \xi | s) 
&= \frac{1}{h^2} \nabla_\theta f_\theta(s, \xi) \left(a -f_\theta\left(s, \xi\right)\right) \\
&= -\nabla_\theta f_\theta(s, \xi) \nabla_a\log \pi(a, \xi | s). 
\end{align*}
Multiplying both sides with $\phi(s,a)$ and taking the conditional expectation yield
\begin{align}
\E_{\pi(a, \xi | s)}\big [\nabla_\theta\log\pi(a, \xi | s) \phi(s, a)  \big ] 
& =  - 
\E_{\pi(a, \xi | s)}\big [\nabla_\theta f_\theta(s, \xi) \nabla_a
\log \pi(a, \xi | s) \phi(s, a) \big] \notag\\
&  = 
\E_{\pi(\xi)} \big[\nabla_\theta f_\theta(s, \xi) \E_{\pi(a | s, \xi)} \big[ -\nabla_a
\log \pi(a, \xi | s) \phi(s, a)\big] \big] \notag\\
&  = 
\E_{\pi(\xi)} \big[\nabla_\theta f_\theta(s, \xi) \E_{\pi(a | s, \xi)} \big[ \nabla_a \phi(s, a) \big] \big]\notag \\
&  = 
\E_{\pi(a, \xi | s)} \big[\nabla_\theta f_\theta(s, \xi) \nabla_a \phi(s, a) \big] \label{ddtmp}
\end{align}
where the third equality comes from Stein's identity \eqref{stein} of $\pi(a| \xi, s)$. 
One the other hand, 
\begin{align}
& \E_{\pi(a, \xi | s)}\big [\nabla_\theta\log\pi(a, \xi | s) \phi(s, a)  \big ] \notag \\
& = 
\E_{\pi(a, \xi | s)}\big [\nabla_\theta\log \pi(a | s) \phi(s, a)  \big ]  + \E_{\pi(a, \xi | s)}\big [\nabla_\theta\log \pi(\xi | s, a) \phi(s, a)  \big ] \label{tempt} \\
& = 
\E_{\pi(a, \xi | s)}\big [\nabla_\theta\log \pi(a | s) \phi(s, a)  \big ], \label{ddtmp2}
\end{align}
where the second term of \eqref{tempt} equals zero because 
$$
 \E_{\pi(a, \xi | s)}\big [\nabla_\theta\log \pi(\xi | s, a) \phi(s, a)  \big ] 
   = \E_{\pi(a|s)}\big [\E_{\pi(\xi| s,a )}\big [\nabla_\theta\log \pi(\xi | s, a)  ]  \big ]  \phi(s, a) \big ] = 0. 
$$
Combining \eqref{ddtmp} and \eqref{ddtmp2} gives the result: 
$$
\E_{\pi(a | s)}\big [\nabla_\theta\log \pi(a | s) \phi(s, a)  \big ] = \E_{\pi(a, \xi | s)} \big[\nabla_\theta f_\theta(s, \xi) \nabla_a \phi(s, a) \big]
$$
The above result does not depend on $h$, and hence holds when $h\to 0^+$. This completes the proof. 
\end{proof}

 \subsection{Estimating $\phi$ for Gaussian Policies}
 \label{sec:appGminvar}
 The parameters $w$ in $\phi$ should be ideally estimated by minimizing the variance of the gradient estimator $\var(\nabla_\theta J(\theta))$ using \eqref{equ:minvar}. 
 Unfortunately, it is computationally slow and memory inefficient to directly solve \eqref{equ:minvar} 
 with the current deep learning platforms, due to the limitation of the auto-differentiation implementations. 
 In general, this problem might be solved with a customized implementation of gradient calculation  as a future work.
 But in the case of Gaussian policies, we find minimizing $\var(\hat \nabla_\mu J(\theta)) + \var(\hat \nabla_\Sigma J(\theta))$ provides an approximation that we find works in our experiments.

More specifically, recall that Gaussian policy has a form of 
\begin{align*}
    \pi(a~|~s) \propto \frac{1}{\sqrt{|\Sigma(s)|}}\exp\left( -\frac{1}{2} \left(a-\mu(s)\right)^\top \Sigma(s)^{-1} \left(a - \mu(s)\right)  \right), 
\end{align*}
where $\mu(s)$ and $\Sigma(s)$ are parametric functions of state s, and $|\Sigma|$ is the determinant of $\Sigma$. 
Following Eq \eqref{equ:mainn} we have 
\begin{align}\label{equ:muJGauss}
\nabla_\mu J(\theta) = \E_{\pi}\left[
-\nabla_a \log \pi(a|s) (Q^\pi(s,a) - \phi(s,a)) + \nabla_a\phi(s,a)
\right], 
\end{align}
where we use the fact that $\nabla_{\mu} f(s, \xi) = 1$ and
$$
\nabla_\mu\log\pi(a|s) = - \nabla_a \log \pi(a|s) = \Sigma(s)^{-1}{(a- \mu(s))}.
$$
Similarly, 
following Eq~\eqref{equ:sigmaJ}, we have 
\begin{align}\label{equ:sigmaJGauss}
\nabla_\Sigma J(\theta) 
= \E_{\pi}\left[\nabla_\Sigma \log \pi(a|s) (Q^\pi(s,a) - \phi(s,a)) - \frac{1}{2} \nabla_a\log \pi(a|s) \nabla_a\phi(s,a)^\top\right ]
\end{align}
where 
\begin{align*}
    \nabla_\Sigma \log \pi(a|s)  
    & = \frac{1}{2}\left (-\Sigma^{-1}(s) + \Sigma(s)^{-1}(a-\mu(s))(a-\mu(s))^\top \Sigma(s)^{-1}\right ). 
\end{align*}
And Eq~\eqref{equ:sigmaJ2} reduces to 
\begin{align}\label{equ:sigmaJGauss2}
\nabla_\Sigma J(\theta) 
= \E_{\pi}\left[\nabla_\Sigma \log \pi(a|s) (Q^\pi(s,a) - \phi(s,a)) + \frac{1}{2}\nabla_{a, a}\phi(s,a) 
\right]. 
\end{align}

Because the baseline function does not change the expectations in \eqref{equ:muJGauss} and \eqref{equ:sigmaJGauss}, 
we can frame  $\min_w \var(\hat \nabla_\mu J(\theta)) + \var(\hat \nabla_\Sigma J(\theta))$ into 
\begin{align}\label{equ:minvarGauss}
\min_{w} \sum_{t=1}^n \left \| g_\mu(s_t,a_t) \right \|_2^2 + \left \| g_\Sigma(s_t, a_t)\right \|_F^2,
\end{align}
where $g_\mu$ and $g_\Sigma$ are the integrands in \eqref{equ:muJGauss} and \eqref{equ:sigmaJGauss} (or \eqref{equ:sigmaJGauss2}) respectively, that is, 
$g_\mu(s,a) =-\nabla_a \log \pi(a|s) (Q^\pi(s,a) - \phi(s,a)) + \nabla_a\phi(s,a)$
and $g_\Sigma(s,a) = \nabla_\Sigma \log \pi(a|s) (Q^\pi(s,a) - \phi(s,a)) 
- \frac{1}{2} \nabla_a\log \pi(a|s) \nabla_a\phi(s,a)^\top$. 
Here $\|A\|_F^2 \defeq \sum_{ij}A_{ij}^2$ is the matrix Frobenius norm.

\begin{figure}[t]
\centering
\setbox1=\hbox{\includegraphics[height=3.4cm]{figures/humanoid/legend_human.pdf}}
{
\setlength{\tabcolsep}{12pt} 
\renewcommand{\arraystretch}{1} 
\begin{tabu}{cc}
\tiny{Hopper-v1}& \tiny{Walker2d-v1}\\
\includegraphics[width=.412\textwidth]{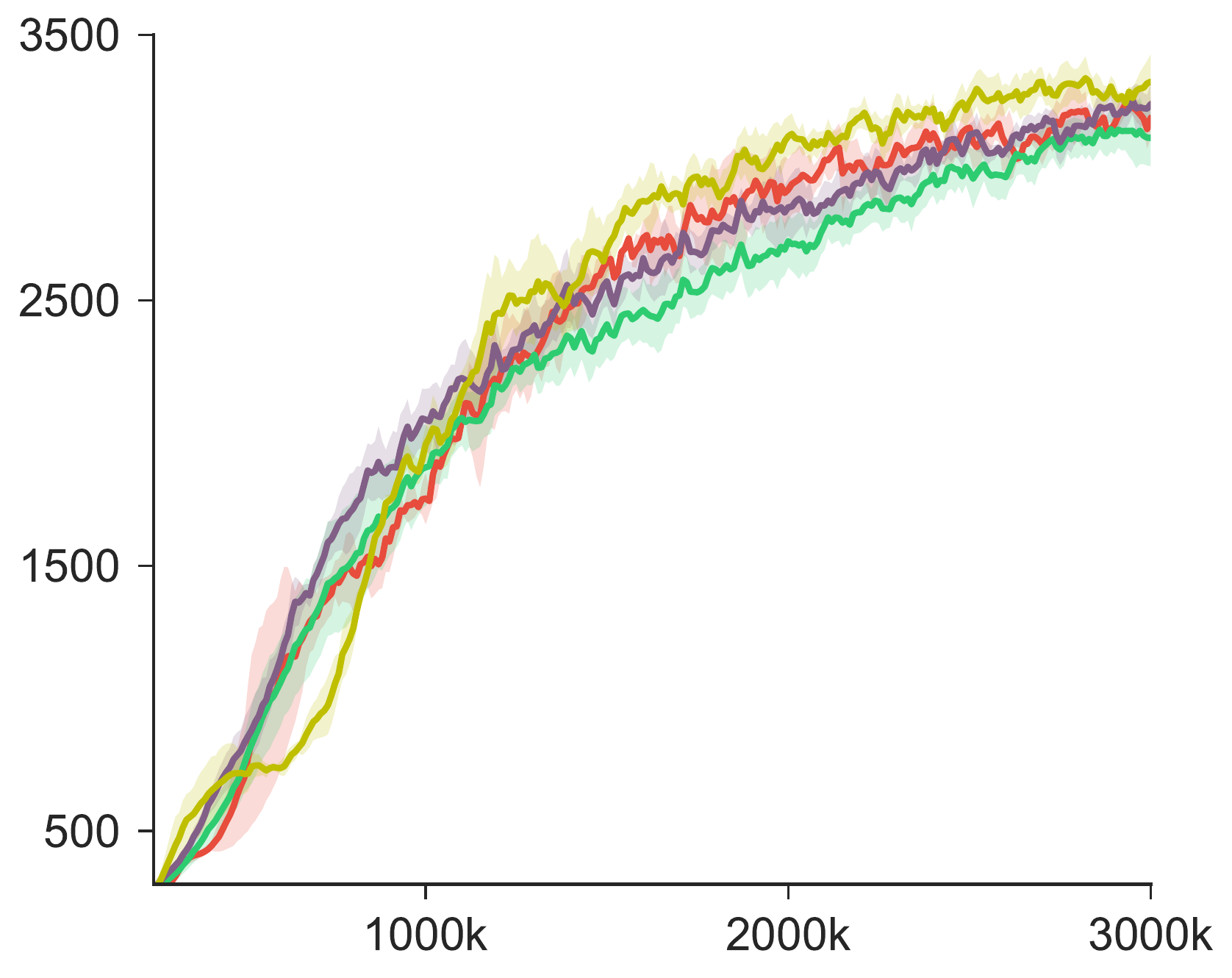} &
\includegraphics[width=.395\textwidth]{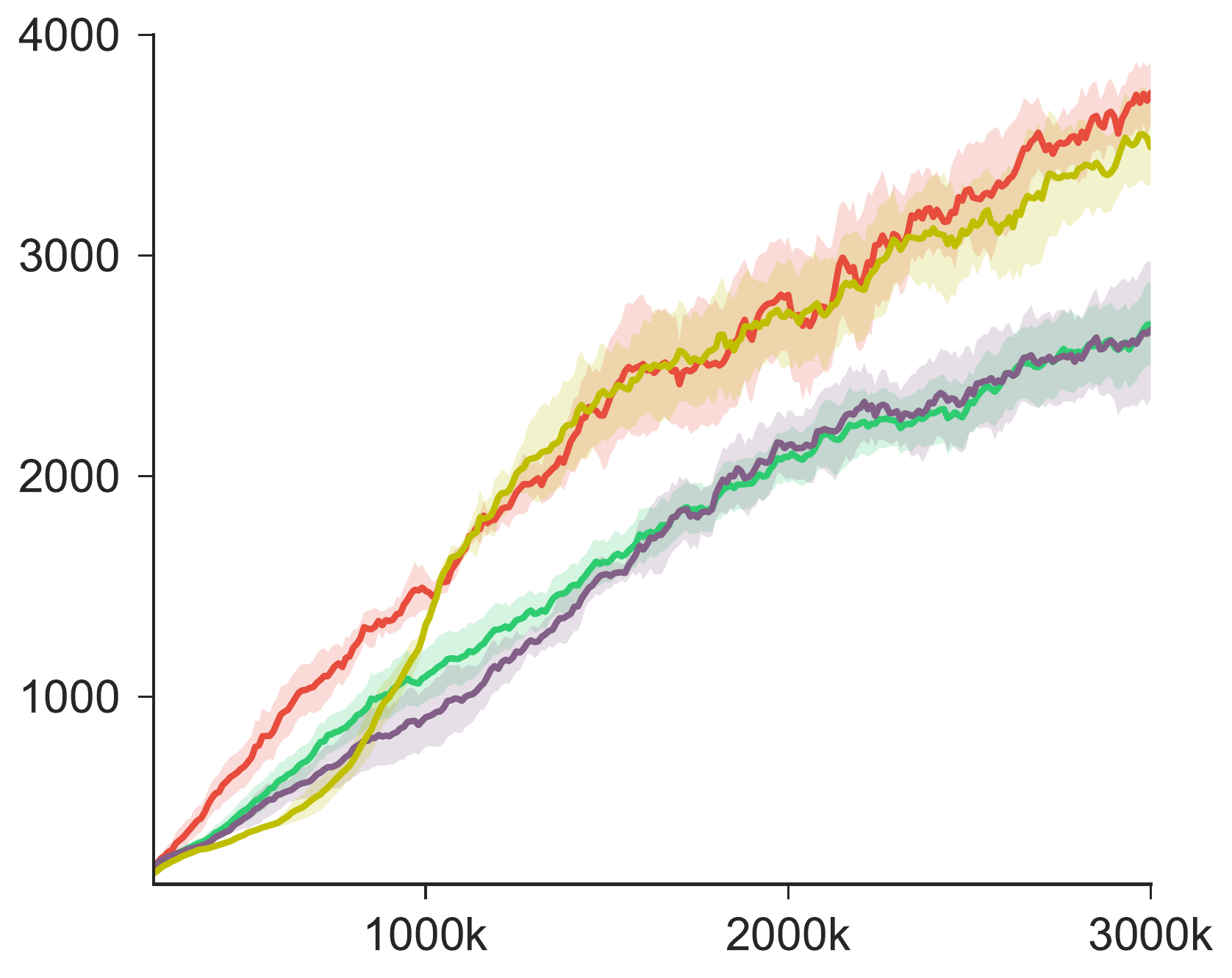}\llap{\makebox[\wd1][l]{\hspace{0bp}\raisebox{3.5cm}{\includegraphics[height=0.9cm]{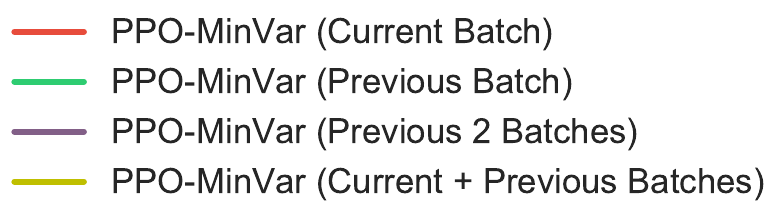}}}}
\end{tabu}}
\vspace{-10bp}
\caption{\small{Evaluation of PPO with Stein control variate when $\phi$ is estimated based on data from different iterations. 
The architecture of $\phi$  is choosen to be \texttt{MLP}.
}
}
\label{fig:place}
\end{figure}

\subsection{Experiment Details}
The advantage estimation $\hat A^\pi(s_t,a_t)$ in Eq~\ref{equ:hatmAdv} is done by GAE with $\lambda= 0.98$, and $\gamma =0.995$ \citep{schulman2015high}, and correspondingly, $\hat Q^\pi(s_t,a_t) = \hat A^\pi(s_t,a_t) + \hat V^\pi(s_t)$ in \eqref{equ:hatm}. 
Observations and advantage are normalized as suggested by \citet{heess2017emergence}. 
The neural networks of the policies $\pi(a|s)$ and baseline functions $\phi_w(s,a)$ use \textit{Relu} activation units, and the neural network of the value function $\hat V^\pi(s)$ uses \textit{Tanh} activation units.
%
All our results use Gaussian MLP policy in our experiments with a neural-network mean and a constant diagonal covariance matrix. 

Denote by $d_s$ and $d_a$ the dimension of the states $s$ and action $a$, respectively. 
Network sizes are follows: 
On Humanoid-v1 and HuamnoidStandup-v1, we use $(d_s ,\sqrt{d_a \cdot 5}, 5)$ for both policy network and value network; 
On other Mujoco environments, we use $(10\cdot d_s,  \sqrt{10\cdot d_s \cdot 5}, 5)$ for both policy network and value network, with learning rate $\frac{0.0009}{\sqrt{(d_s \cdot 5)}}$ for policy network and $ \frac{0.0001}{\sqrt{(d_s \cdot 5)}}$ for value network. 
The network for $\phi$ is (100, 100) with state as the input and the action concatenated with the second layer. 

All experiments of PPO with Stein control variate selects the best learning rate from \{0.001, 0.0005, 0.0001\} for $\phi$ networks.
We use ADAM \citep{kingma2014adam} for gradient descent and evaluate the policy every 20 iterations. 
Stein control variate is trained for the best iteration in range of \{250, 300, 400, 500, 800\}.

\subsection{Estimating $\phi$ using Data from Previous Iterations}
\label{sec:appphi}
Our experiments on policy optimization estimate 
$\phi$ based on the data from the current iteration. Although this theoretically introduces a bias into the gradient estimator, we find it works well empirically in our experiments. 
In order to exam the effect of such bias, 
we tested a variant of PPO-MinVar-MLP
which fits $\phi$ using data from the previous iteration, or previous two iterations, 
both of which do not introduce additional bias due to the dependency of $\phi$ on the data. 
 Figure~\ref{fig:place} shows the results in Hopper-v1 and Walker2d-v1, 
where we find that using the data from previous iterations does not seem to improve the result. 
The may be because during the policy optimization, 
the updates of $\phi$ are early stopped and hence do not introduce overfitting even when it is based on the data from the current iteration.

\end{document}